\pgfplotsset{compat=1.17}
\newcommand{\axissize}{\@setfontsize{\axissize}{5pt}{5pt}}
\newcommand{\Labelsize}{\@setfontsize{\Labelsize}{5pt}{5pt}}
\newtheorem{definition}{Definition}
\newtheorem{theorem}{Theorem}
\newtheorem{corollary}{Corollary}[theorem]
\newtheorem{lemma}{Lemma}[theorem]
\newcommand{\norm}[1]{\left\lVert#1\right\rVert}
\DeclareMathOperator*{\argmin}{arg\,min}
\DeclareMathOperator{\Tr}{Tr}
\newcolumntype{P}[1]{>{\centering\arraybackslash}p{#1}}
\pgfplotsset{mystyle2/.style={%
        font=\rmfamily\Labelsize,
        width=0.227\textwidth,
        mark size=1.1pt,
        ylabel near ticks,
        xlabel near ticks,
        label style = {font=\scriptsize},
        tick label style = {font=\scriptsize, yshift=0.5ex},
        ylabel shift = -6 pt, 
        xlabel shift = -5 pt,
        title style={yshift=-1.2ex,font=\footnotesize},
        y tick label style={/pgf/number format/.cd,fixed,fixed zerofill,precision=2,/tikz/.cd},
        legend image post style={scale=0.5},
        every axis plot/.append style={semithick},
        legend style={font=\scriptsize, mark size=4pt},
        legend columns=1, legend style={/tikz/column 2/.style={column sep=0.5pt}},
        mark options={scale=1.7}}}
\newcommand\blfootnote[1]{%
  \begingroup
  \renewcommand\thefootnote{}\footnote{#1}%
  \addtocounter{footnote}{-1}%
  \endgroup
}
\title{Detecting Beneficial Feature Interactions for Recommender Systems}
\author{Yixin Su, \textsuperscript{\rm 1}
Rui Zhang, \textsuperscript{\rm 2}
Sarah Erfani, \textsuperscript{\rm 1}
Zhenghua Xu \textsuperscript{\rm 3}\blfootnote{Rui Zhang and Zhenghua Xu are corresponding authors.}\\
}
\begin{document}

\maketitle

\begin{abstract}
\begin{quote}
Feature interactions are essential for achieving high accuracy in recommender systems.
Many studies take into account the interaction between every pair of features. 
However, this is suboptimal because some feature interactions may not be that relevant to the recommendation result, and taking them into account may introduce noise and decrease recommendation accuracy.
To make the best out of feature interactions, we propose a graph neural network approach to effectively model them, together with a novel technique to automatically detect those feature interactions that are beneficial in terms of recommendation accuracy.
The automatic feature interaction detection is achieved via edge prediction with an $L_0$ activation regularization. 
Our proposed model is proved to be effective through the information bottleneck principle and statistical interaction theory.
Experimental results show that our model (i) outperforms existing baselines in terms of accuracy, and (ii) automatically identifies beneficial feature interactions.
\end{quote}
\end{abstract}

\section{Introduction}
Recommender systems play a central role in addressing information overload issues in many Web applications, such as e-commerce, social media platforms, and lifestyle apps. The core of recommender systems is to predict how likely a user will interact with an item (e.g., purchase, click). 
An important technique is to discover the effects of features (e,g., contexts, user/item attributes) on the target prediction outcomes for fine-grained analysis \cite{shi2014collaborative}. 
Some features are correlated to each other, and the joint effects of these correlated features (i.e., \textit{feature interactions}) are crucial for recommender systems to get high accuracy \cite{blondel2016polynomial,he2017neural}.
For example, it is reasonable to recommend a user to use \textit{Uber} on a \textit{rainy} day at off-work hours (e.g., during \textit{5-6pm}). In this situation, considering the feature interaction $<\textit{5-6pm}, \textit{rainy}>$ is more effective than considering the two features separately.
Therefore, in recent years, many research efforts have been put in modeling the feature interactions \cite{he2017neural,lian2018xdeepfm,song2019autoint}.
These models take into account the interaction between every pair of features. However, in practice, not all feature interactions are relevant to the recommendation result \cite{langley1994selection,siegmund2012predicting}. Modeling the feature interactions that provide little useful information may introduce noise and cause overfitting, and hence decrease the prediction accuracy \cite{zhang2016understanding,louizos2017learning}. 
For example, a user may use \textit{Gmail} on a workday no matter what weather it is. 
However, if the interaction of workday and specific weather is taking into account in the model and due to the bias in the training set (e.g., the weather of the days when the Gmail usage data are collected happens to be cloudy), the interaction $<\textit{Monday}, \textit{Cloudy}>$ might be picked up by the model and make less accurate recommendations on Gmail usage.
Some work considers each feature interaction's importance through the attention mechanism \cite{xiao2017attentional,song2019autoint}. However, these methods still take all feature interactions into account.  
Moreover, they capture each \textit{individual} feature interaction's contribution to the recommendation prediction, failing to capture the \textit{holistic} contribution of a set of feature interactions together.

In this paper, we focus on identifying the set of feature interactions that together produce the best recommendation performance. 
To formulate this idea, we propose the novel problem of \textit{detecting beneficial feature interactions}, which is defined as identifying the set of feature interactions that contribute most to the recommendation accuracy (see Definition 1 for details).
Then, we propose a novel graph neural network (GNN)-based recommendation model, $L_0$-SIGN, that detects the most beneficial feature interactions and utilizes only the beneficial feature interactions for recommendation, where each data sample is treated as a graph, features as nodes and feature interactions as edges.
Specifically, our model consists of two components.
One component is an $L_0$ edge prediction model, which detects the most beneficial feature interactions by predicting the existence of edges between nodes.
To ensure the success of the detection, an $L_0$ activation regularization is proposed to encourage unbeneficial edges (i.e. feature interactions) to have the value of $0$, which means that edge does not exist. Another component is a graph classification model, called \textbf{S}tatistical \textbf{I}nteraction \textbf{G}raph neural \textbf{N}etwork (SIGN). SIGN takes nodes (i.e., features) and detected edges (i.e., beneficial feature interactions) as the input graph, and outputs recommendation predictions by effectively modeling and aggregating the node pairs that are linked by an edge.

Theoretical analyses are further conducted to verify the effectiveness of our model. First, the most beneficial feature interactions are guaranteed to be detected in $L_0$-SIGN. This is proved by showing that the empirical risk minimization procedure of $L_0$-SIGN is a variational approximation of the Information Bottleneck (IB) principle, which is a golden criterion to find the most relevant information correlating to target outcomes from inputs \cite{tishby2000information}. Specifically, only the most beneficial feature interactions will be retained in $L_0$-SIGN. It is because, in the training stage, our model simultaneously minimizes the number of detected feature interactions by the $L_0$ activation regularization, and maximizes the recommendation accuracy with the detected feature interactions.
Second, we further show that the modeling of the detected feature interactions in SIGN is very effective. By accurately leveraging the relational reasoning ability of GNN, \textit{iff} a feature interaction is detected to be beneficial in the $L_0$ edge prediction component, it will be modeled in SIGN as a statistical interaction (an interaction is called statistical interaction if the joint effects of variables are modeled correctly).

We summarize our contributions as follows:
\begin{itemize}
\item This is the first work to formulate the concept of beneficial feature interactions, and propose the problem of detecting beneficial feature interactions for recommender systems.
\item We propose a model, named $L_0$-SIGN, to detect the beneficial feature interactions via a graph neural network approach and $L_0$ regularization.
\item We theoretically prove the effectiveness of $L_0$-SIGN through the information bottleneck principle and statistical interaction theory.
\item We have conducted extensive experimental studies. The results show that (i) $L_0$-SIGN outperforms existing baselines in terms of accuracy; (ii) $L_0$-SIGN effectively identifies beneficial feature interactions, which result in the superior prediction accuracy of our model.
\end{itemize}

\section{Related Work}
\label{sec:related_work}
\subsubsection{Feature Interaction based Recommender Systems}
Recommender systems are one of the most critical research domains in machine learning \cite{lu2015recommender,wang2019doubly}.
Factorization machine (FM) \cite{rendle2010factorization} is one of the most popular algorithms in considering feature interactions. However, FM and its deep learning-based extensions \cite{xiao2017attentional,he2017neural,guo2017deepfm} consider all possible feature interactions, while our model detects and models only the most beneficial feature interactions.
Recent work considers the importance of feature interactions by giving each feature interaction an attention value \cite{xiao2017attentional,song2019autoint}, or select important interactions by using gates \cite{liu2020autofis} or by searching in a tree-structured space \cite{luo2019autocross}. In these methods, the importance is not determined by the holistic contribution of the feature interactions, thus limit the performance. Our model detects beneficial feature interactions that together produce the best recommendation performance.

\subsubsection{Graph Neural Networks (GNNs)} 
GNNs can facilitate learning entities and their holistic relations \cite{battaglia2018relational}. Existing work leverages GNNs to perform relational reasoning in various domains. For example, \citet{duvenaud2015convolutional} and \citet{gilmer2017neural} use GNNs to predict molecules' property by learning their features from molecular graphs.
\citet{chang2016compositional} use GNNs to learn object relations in dynamic physical systems. Besides, some relational reasoning models in computer vision such as \cite{santoro2017simple,wang2018non} have been shown to be variations of GNNs \cite{battaglia2018relational}. Fi-GNN \cite{li2019fi} use GNNs for feature interaction modeling in CTR prediction. However, it still models all feature interactions.
Our model theoretically connects the beneficial feature interactions in recommender systems to the edge set in graphs and leverages the relational reasoning ability of GNNs to model beneficial feature interactions' holistic contribution to the recommendation predictions.

\subsubsection{$L_0$ Regularization} 
$L_0$ regularization sparsifies models by penalizing non-zero parameters. Due to the problem of non-differentiable, it does not attract attention previously in deep learning domains until \citet{louizos2017learning} solve this problem by proposing a hard concrete distribution in $L_0$ regularization. Then, $L_0$ regularization has been commonly utilized to compress neural networks \cite{tsang2018neural,shi2019pvae,yang2017bridging}.
We explore to utilize $L_0$ regularization to limit the number of detected edges in feature graphs for beneficial feature interaction detection.

\section{Problem Formulation and Definitions}
\label{sec:prel}

Consider a dataset with input-output pairs: $D= \{(X_{n}, y_{n})\}_{1\leq n \leq N}$, where $y_{n}\in\mathbb{R/\mathbb{Z}}$, $X_{n}=\{c_k:x_k\}_{k\in J_n}$ is a set of categorical features ($c$) with their values ($x$), $J_n\subseteq J$ and $J$ is an index set of all features in $D$. 
For example, in app recommendation, $X_{n}$ consists of a user ID, an app ID and context features (e.g., \textit{Cloudy}, \textit{Monday}) with values to be $1$ (i.e., recorded in this data sample), and $y_{n}$ is a binary value to indicate whether the user will use this app. 
Our goal is to design a model $F(X_{n})$ that detects the most beneficial \textit{pairwise}\footnote{We focus on pairwise feature interactions in this paper, and we leave high-order feature interactions in future work.} feature interactions and utilizes only the detected feature interactions to predict the true output $y_n$. 

\subsubsection{Beneficial Pairwise Feature Interactions}
Inspired by the definition of relevant feature by usefulness \cite{langley1994selection,blum1997selection}, we formally define the beneficial feature interactions in Definition \ref{def:rfi}.
\begin{definition}
\label{def:rfi}
(\textbf{Beneficial Pairwise Feature Interactions}) Given a data sample $X=\{x_i\}_{1\leq i\leq k}$ of $k$ features whose corresponding full pairwise feature interaction set is $A=\{(x_i, x_j)\}_{1\leq i,j\leq k}$, a set of pairwise feature interactions $I_1 \subseteq A$ is more beneficial than another set of pairwise feature interactions $I_2 \subseteq A$ to a model with $X$ as input if the accuracy of the predictions that the model produces using $I_1$ is higher than the accuracy achieved using $I_2$.

\end{definition}

The above definition formalizes our detection goal: find and retain only a part of feature interactions that together produce the highest prediction accuracy by our model.

\subsubsection{Statistical Interaction}
Statistical interaction, or non-additive interaction, ensures a joint influence of several variables on an output variable is not additive \cite{tsang2018neural}. \citet{sorokina2008detecting} formally define the pairwise statistical interaction:
\begin{definition}
\label{def:spi}
(\textbf{Pairwise Statistical Interaction}) Function $F(X)$, where $X=\{x_i\}_{1\leq i\leq k}$ has $k$ variables, shows \textbf{no} pairwise statistical interaction between variables $x_i$ and $x_j$ if $F(X)$ can be expressed as the sum of two functions $f_{\char`\\i}$ and $f_{\char`\\j}$, where $f_{\char`\\i}$ does not depend on $x_i$ and $f_{\char`\\j}$ does not depend on $x_j$:
\begin{equation}
\label{fun:spi}
\small
\begin{split}
    F(X) = & f_{\char`\\i}(x_{1},...,x_{i-1},x_{i+1},...,x_{k}) \\
    & + f_{\char`\\j}(x_{1},...,x_{j-1},x_{j+1},...,x_{k}).
\end{split}
\end{equation}
\end{definition}
More generally, if using $\bm{v}_{i}\in\mathbb{R}^{d}$ to describe the $i$-th variable with $d$ factors \cite{rendle2010factorization}, e.g., variable embedding, each variable can be described in a vector form $\bm{u}_i=x_i\bm{v}_i$. 
Then, we define the pairwise statistical interaction in variable factor form by changing the Equation \ref{fun:spi} into:  
\begin{equation}
\label{fun:spi_emb}
\small
\begin{split}
    F(X) = & f_{\char`\\i}(\bm{u}_1,...,\bm{u}_{i-1},\bm{u}_{i+1},...,\bm{u}_{k}) \\
    & + f_{\char`\\j}(\bm{u}_{1},...,\bm{u}_{j-1},\bm{u}_{j+1},...,\bm{u}_{k}).
\end{split}
\end{equation}

The definition indicates that the interaction information of variables (features) $x_i$ and $x_j$ will not be captured by $F(X)$ if it can be expressed as the above equations. Therefore, to correctly capture the interaction information, our model should not be expressed as the above equations. In this paper, we theoretically prove that the interaction modeling in our model strictly follow the definition of pairwise statistical interaction, which ensures our model to correctly capture interaction information.

\section{Our Proposed Model}
\label{sec:method}

In this section, we formally describe $L_0$-SIGN's overview structure and the two components in detail. 
Then, we provide theoretical analyses of our model.

\subsection{$L_0$-SIGN}
\subsubsection{Model Overview}
\label{subsec:model_overview}
Each input of $L_0$-SIGN is represented as a graph (without edge information), where its features are nodes and their interactions are edges.
More specifically, a data sample $n$ is a graph $G_n(X_n, E_n)$, and $E_n=\{(e_n)_{ij}\}_{i,j\in X_n}$ is a set of edge/interaction values \footnote{In this paper, nodes and features are used interchangeably, and the same as edges and feature interactions.}, where $(e_n)_{ij}\in\{1,0\}$ , $1$ indicates that there is an edge (beneficial feature interaction) between nodes $i$ and $j$, and $0$ otherwise. Since no edge information are required, $E_n=\emptyset$.

While predicting, the $L_0$ edge prediction component, $F_{ep}(X_n;\bm{\omega})$, analyzes the existence of edges on each pair of nodes, where $\bm{\omega}$ are parameters of $F_{ep}$, and outputs the predicted edge set $E_n^{'}$. Then, the graph classification component, SIGN, performs predictions based on $G(X_n, E_n^{'})$.
Specifically, SIGN firstly conducts interaction modeling on each pair of initial node representation that are linked by an edge. Then, each node representation is updated by aggregating all of the corresponding modeling results. Finally, all updated node representations are aggregated to get the final prediction.
The general form of SIGN prediction function is $y_{n}^{'}=f_{S}(G_n(X_n, E^{'}_n);\bm{\theta})$, where $\bm{\theta}$ is SIGN's parameters and the predicted outcome $y_{n}^{'}$ is the graph classification result. Therefore, the $L_0$-SIGN prediction function $f_{LS}$ is:
\begin{equation}
\small
\label{fun:l0_sign_general}
    f_{LS}(G_n(X_n, \emptyset);\bm{\theta},\bm{\omega}) = f_{S}(G_n(X_n, F_{ep}(X_n;\bm{\omega})); \bm{\theta}).
\end{equation}

Figure \ref{fig:sign_frame} shows the structure of $L_0$-SIGN\footnote{Section \textit{D}
of Appendix lists the pseudocodes of our model and the training procedures.}. Next, we will show the two components in detail. 
When describing the two components, we focus on one input-output pair, so we omit the index $``n"$ for simplicity. 

\begin{figure}[t]
\centering
\centerline{\includegraphics[width=0.88\columnwidth]{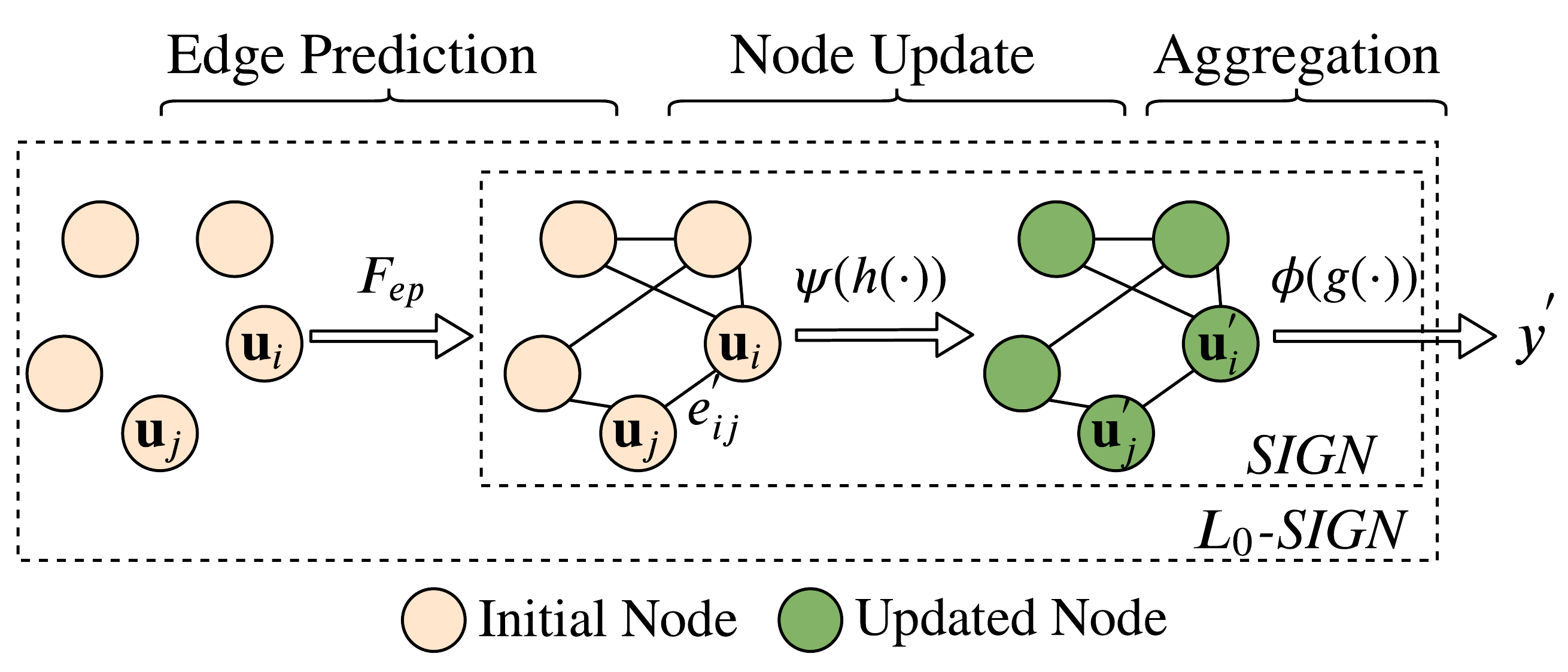}}
\caption{An overview of $L_0$-SIGN.}
\label{fig:sign_frame}
\end{figure}

\subsubsection{$L_0$ Edge Prediction Model}
\label{subsec:l0}
A (neural) matrix factorization (MF) based model is used for edge prediction. MF is effective in modeling relations between node pairs by factorizing the adjacency matrix of a graph into node dense embeddings \cite{menon2011link}. In $L_0$-SIGN, since we do not have the ground truth adjacency matrix, the gradients for optimizing this component come from the errors between the outputs of SIGN and the target outcomes.

Specifically, the edge value, $e^{'}_{ij}\in E^{'}$, is predicted by an edge prediction function $f_{ep}(\bm{v}^{e}_{i}, \bm{v}^{e}_{j}):\mathbb{R}^{2\times b}\rightarrow\mathbb{Z}_{2}$, which takes a pair of node embeddings for edge prediction with dimension $b$ as input, and output a binary value, indicating whether the two nodes are connected by an edge. $\bm{v}^{e}_{i}=\bm{o}_i\bm{W}^{e}$ is the embedding of node $i$ for edge prediction, where $\bm{W}^{e}\in\mathbb{R}^{|J|\times b}$ are parameters and $\bm{o}_i$ is the one-hot embedding of node $i$. 
To ensure that the detection results are identical to the same pair of nodes, $f_{ep}$ should be invariant to the order of its input, i.e., $f_{ep}(\bm{v}^{e}_{i}, \bm{v}^{e}_{j})=f_{ep}(\bm{v}^{e}_{j}, \bm{v}^{e}_{i})$. For example, in our experiments, we use an multilayer neural network (MLP) with the input as the element-wise product result of $\bm{v}^{e}_{i}$ and $\bm{v}^{e}_{j}$ (details are in the section of experiments).
Note that $e^{'}_{ii}=1$ can be regarded as the feature $i$ being beneficial.

While training, an $L_0$ activation regularization is performed on $E^{'}$ to minimize the number of detected edges, which will be described later in the section about the empirical risk minimization function of $L_0$-SIGN. 

\subsubsection{SIGN}
\label{subsec:sign}

In SIGN, each node $i$ is first represented as an initial node embedding $\bm{v}_i$ of $d$ dimensions for interaction modeling, i.e., each node has node embeddings $\bm{v}^{e}_{i}$ and $\bm{v}_i$ for edge prediction and interaction modeling, respectively, to ensure the best performance of respective tasks. 
Then, the interaction modeling is performed on each node pair $(i, j)$ that $e^{'}_{ij}=1$, by a non-additive function $h(\bm{u}_i, \bm{u}_j):\mathbb{R}^{2\times d}\rightarrow\mathbb{R}^{d}$ (e.g., an MLP), where $\bm{u}_i=x_i\bm{v}_i$. The output of $h(\bm{u}_i, \bm{u}_j)$ is denoted as $\bm{z}_{ij}$. 
Similar to $f_{ep}$, $h$ should also be invariant to the order of its input. The above procedure can be reformulated as $\bm{s}_{ij} = e^{'}_{ij}\bm{z}_{ij}$, where $\bm{s}_{ij}\in\mathbb{R}^{d}$ is called the statistical interaction analysis result of $(i, j)$.

Next, each node is updated by aggregating all of the analysis results between the node and its neighbors using a linear aggregation function $\psi$: $\bm{v}_{i}^{'} = \psi(\varsigma_{i})$, where $\bm{v}_{i}^{'}\in \mathbb{R}^d$ is the updated embedding of node $i$, $\varsigma_{i}$ is a set of statistical interaction analysis results between node $i$ and its neighbors. Note that $\psi$ should be invariant to input permutations, and be able to take inputs with variant
number of elements (e.g., element-wise summation/mean). 

Finally, each updated node embedding will be transformed into a scalar value by a linear function $g:\mathbb{R}^d \rightarrow \mathbb{R}$, and all scalar values are linearly aggregated as the output of SIGN. That is: $y^{'} = \phi(\nu)$, where $\nu=\{g(\bm{u}_{i}^{'})\mid i\in X\}$, $\bm{u}_{i}^{'}=x_i\bm{v}_i^{'}$ and $\phi:\mathbb{R}^{|\nu|\times 1} \rightarrow \mathbb{R}$ is an aggregation function having similar properties to $\psi$. Therefore, the prediction function of SIGN is:
\begin{equation}
\small
\label{fun:si_final_function}
    f_{S}(G; \bm{\theta}) = \phi(\{g(\psi(\{e^{'}_{ij}h(\bm{u}_{i}, \bm{u}_{j})\}_{j\in X}))\}_{i \in X}).
\end{equation}

In summary, we formulate the $L_0$-SIGN prediction function of Equation \ref{fun:l0_sign_general} with the two components in detail \footnote{The time complexity analysis is in Section \textit{E\iffalse\ref{appx:time_complexity}\fi}   of Appendix.}:
\begin{equation}
\small
\label{fun:l0_final_function}
f_{LS}(G; \bm{\omega}, \bm{\theta}) =\phi(\{g(\psi(\{f_{ep}(\bm{v}^{e}_{i}, \bm{v}^{e}_{j})h(\bm{u}_{i}, \bm{u}_{j})\}_{j\in X}))\}_{i \in X}).
\end{equation}

\subsection{Empirical Risk Minimization Function}
The empirical risk minimization function of $L_0$-SIGN minimizes a loss function, a reparameterized $L_0$ activation regularization\footnote{Section \textit{F\iffalse\ref{appx:l0_reg}\fi} of Appendix gives detailed description about $L_0$ regularization and its reparameterization trick.} on $E^{'}_{n}$, and an $L_2$ activation regularization on $\bm{z}_n$. Instead of regularizing parameters, activation regularization regularizes the output of models \cite{merity2017revisiting}. We leverage activation regularization to link our model with the IB principle to ensures the success of the interaction detection, which will be discussed in the theoretical analyses.
Formally, the function is:
\begin{equation}
\label{fun:l0_sign_loss}
\small
\begin{split}
    \mathcal{R}(\bm{\theta}, \bm{\omega})=\frac{1}{N}&\sum_{n=1}^{N}(\mathcal{L}(F_{LS}(G_n;\bm{\omega},\bm{\theta}),y_{n}) \\
    + &\lambda_1 \sum_{i,j\in X_n}(\pi_{n})_{ij} + \lambda_2 \norm{\bm{z}_{n}}_2), \\
    \bm{\theta}^{*}, \bm{\omega}^{*}=&\argmin_{\bm{\theta},\bm{\omega}} {\mathcal{R}(\bm{\theta}, \bm{\omega})}, 
\end{split}
\end{equation}
where $(\pi_{n})_{ij}$ is the probability of $(e^{'}_{n})_{ij}$ being 1 (i.e., $(e^{'}_{n})_{ij}=Bern((\pi_{n})_{ij})$), $G_n=G_n(X_n, \emptyset)$, $\lambda_1$ and $\lambda_2$ are weight factors for the regularizations, $\mathcal{L}(\cdot)$ corresponds to a loss function and $\bm{\theta}^{*}$, $\bm{\omega}^{*}$ are final parameters.
 
A practical difficulty of performing $L_0$ regularization is that it is non-differentiable. Inspired by \cite{louizos2017learning}, we smooth the $L_0$ activation regularization by approximating the Bernoulli distribution with a hard concrete distribution 
so that $e^{'}_{ij}$ is differentiable. Section \textit{G\iffalse\ref{appx:hard_concrete}\fi} of Appendix gives details about the approximation.

\subsection{Theoretical Analyses}
We conduct theoretical analyses to verify our model's effectiveness, including how $L_0$-SIGN satisfies the IB principle to guarantee the success of beneficial interaction detection, the relation of the statistical interaction analysis results with the spike-and-slab distribution (the golden standard in sparsity), and how SIGN and $L_0$-SIGN strictly follow the statistical interaction theory for effective interaction modeling.

\subsubsection{Satisfaction of Information Bottleneck (IB) Principle}
\label{subsec:relation_ib}
IB principle \cite{tishby2000information} aims to extract the most relevant information that input random variables $\bm{X}$ contains about output variables $\bm{Y}$ by considering a trade-off between the accuracy and complexity of the process. The relevant part of $\bm{X}$ over $\bm{Y}$ denotes $\bm{S}$. The IB principle can be mathematically represented as:
\begin{equation}
\small
\label{eq:ib}
\min \quad (I(\bm{X};\bm{S}) - \beta I(\bm{S};\bm{Y})),
\end{equation}
where $I(\cdot)$ denotes mutual information between two variables and $\beta > 0 $ is a weight factor.

\begin{figure}
    \centering
    \includegraphics[width=0.70\columnwidth]{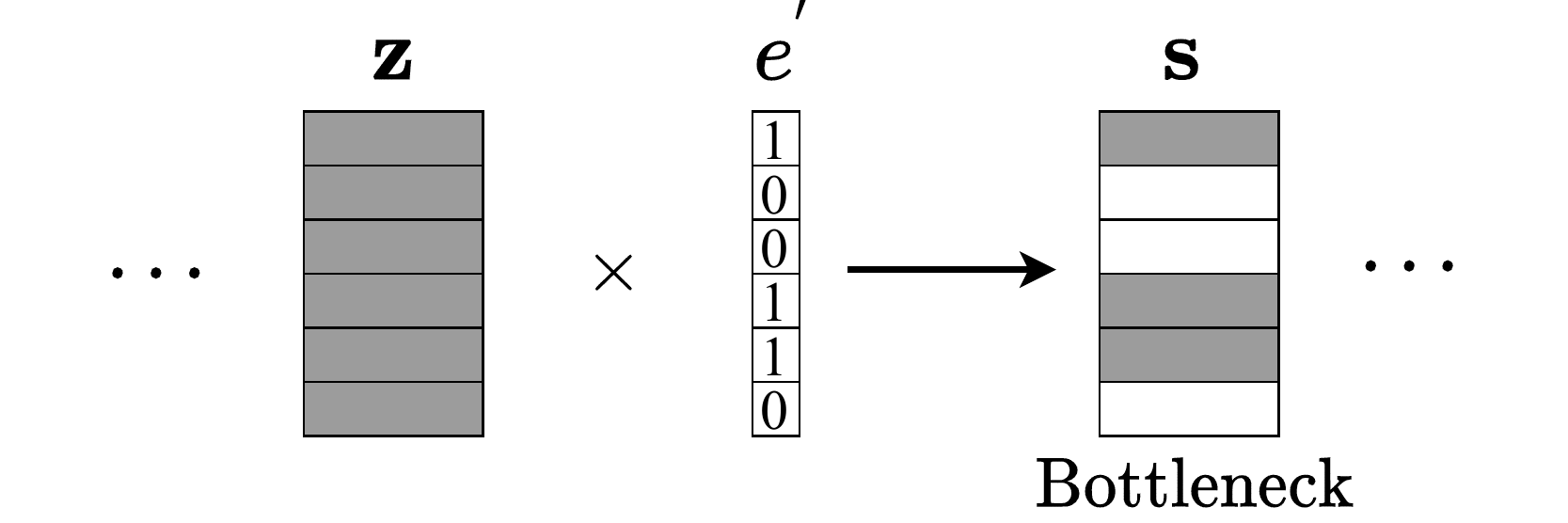}
    \caption{The interaction analysis results $\bm{s}$ from $L_0$-SIGN are like the relevant part (bottleneck) in the IB principle.}
    \label{fig:bottleneck}
\end{figure}

The empirical risk minimization function of $L_0$-SIGN in Equation \ref{fun:l0_sign_loss} can be approximately derived from Equation \ref{eq:ib} (Section \textit{A\iffalse\ref{appx:ib2sign}\fi} of Appendix gives a detailed derivation):
\begin{equation}
\small
 \min \mathcal{R}(\bm{\theta}, \bm{\omega}) \approx \min (I(\bm{X};\bm{S}) - \beta I(\bm{S};\bm{Y})). 
\end{equation}
Intuitively, the $L_0$ regularization in Equation \ref{fun:l0_sign_loss} minimizes a Kullback--Leibler divergence between every $e^{'}_{ij}$ and a Bernoulli distribution $Bern(0)$, and the $L_2$ regularization minimizes a Kullback–Leibler divergence between every $\bm{z}_{ij}$ and a multivariate standard distribution $\mathcal{N}(\bm{0}, \bm{I})$. As illustrated in Figure \ref{fig:bottleneck}, the statistical interaction analysis results, $\bm{s}$, can be approximated as the relevant part $\bm{S}$ in Equation \ref{eq:ib}.

Therefore, through training $L_0$-SIGN, $\bm{s}$ is the most compact (beneficial) representation of the interactions for recommendation prediciton. This provides a theoretical guarantee that the most beneficial feature interactions will be detected.

\subsubsection{Relation to Spike-and-slab Distribution}
The spike-and-slab distribution \cite{mitchell1988bayesian} is the golden standard in sparsity. It is defined as a mixture of a delta spike at zero and a continuous distribution over the real line (e.g., a standard normal):
\begin{equation}
\label{fun:ssd}
\small
\begin{gathered}
    p(a)= Bern(\pi),  \quad\quad  p(\theta\mid a=0)=\delta(\theta), \\
    p(\theta\mid a=1)=\mathcal{N}(\theta\mid 0,1).
\end{gathered}
\end{equation}
We can regard the spike-and-slab distribution as the product of a continuous distribution and a Bernoulli distribution. In $L_0$-SIGN, the predicted edge value vector $\bm{e}^{'}$ (the vector form of $E^{'}$) is a multivariate Bernoulli distribution and can be regarded as $p(a)$. The interaction modeling result $\bm{z}$ is a multivariate normal distribution and can be regarded as $p(\theta)$. Therefore, $L_0$-SIGN's statistical interaction analysis results, $\bm{s}$, is a multivariate spike-and-slab distribution that performs edge sparsification by discarding unbeneficial feature interactions. The retained edges in the spike-and-slab distribution are sufficient for $L_0$-SIGN to provide accurate predictions.

\subsubsection{Statistical Interaction in SIGN}
The feature interaction modeling in SIGN strictly follows the definition of statistical interaction, which is formally described in Theorem \ref{thm:interaction} (Section \textit{B\iffalse\ref{appx:prove_theorem}\fi} of Appendix gives the proof):

\begin{theorem}
\label{thm:interaction}
(\textbf{Statistical Interaction in SIGN}) Consider a graph $G(X, E)$, where $X$ is the node set and $E=\{e_{ij}\}_{ i,j \in X}$ is the edge set that $e_{ij}\in\{0,1\}, e_{ij}=e_{ji}$. Let $G(X, E)$ be the input of SIGN function $f_{S}(G)$ in Equation \ref{fun:si_final_function}, then the function flags pairwise statistical interaction between node $i$ and node $j$ if and only if they are linked by an edge in $G(X, E)$, i.e., $e_{ij}=1$.
\end{theorem}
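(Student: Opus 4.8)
The plan is to collapse the nested aggregations of $f_S$ into an explicit closed form, classify the resulting summands by whether they depend on $\bm{u}_i$, on $\bm{u}_j$, on both, or on neither, and then read the two directions of the ``iff'' directly off that classification. Concretely, $g$ is linear and $\psi,\phi$ are linear, permutation-invariant aggregators accepting inputs of arbitrary size; up to a fixed linear reparametrization that can be folded into $g$, such aggregators are plain sums, and $g=\bm{w}^{\top}(\cdot)$ for a fixed $\bm{w}$. Pushing the edge values $e_{ij}$ (the edges of the fixed input graph $G(X,E)$) and the scalars $x_i$ through these linear maps, Equation~\ref{fun:si_final_function} collapses to
\[
  f_S(G)=\sum_{p,q\in X} e_{pq}\,x_p\,g\!\left(h(\bm{u}_p,\bm{u}_q)\right),
\]
and, using $e_{pq}=e_{qp}$ together with the order-invariance of $h$, the total contribution of an unordered pair $\{i,j\}$ with $i\neq j$ equals $e_{ij}(x_i+x_j)\,g\!\left(h(\bm{u}_i,\bm{u}_j)\right)$.

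Next I would split the double sum relative to $\{i,j\}$ into four groups: (a) summands with $p,q\notin\{i,j\}$, depending on neither $\bm{u}_i$ nor $\bm{u}_j$; (b) summands involving $i$ but not $j$ — i.e. $(i,q)$ and $(p,i)$ with $p,q\neq j$, plus $(i,i)$ — which do not depend on $\bm{u}_j$; (c) the mirror group of (b) with $i$ and $j$ swapped, which does not depend on $\bm{u}_i$; and (d) the single cross term $e_{ij}(x_i+x_j)g(h(\bm{u}_i,\bm{u}_j))$. For the ``only if'' direction I would argue by contraposition: if $e_{ij}=0$, group (d) vanishes, so setting $f_{\backslash j}:=(\mathrm{a})+(\mathrm{b})$ and $f_{\backslash i}:=(\mathrm{c})$ realizes exactly the additive decomposition of Definition~\ref{def:spi} in the factor form of Equation~\ref{fun:spi_emb}; hence SIGN shows \emph{no} pairwise statistical interaction between $i$ and $j$, so whenever it flags one we must have $e_{ij}=1$.

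For the ``if'' direction, assume $e_{ij}=1$ and suppose, for contradiction, that $f_S=f_{\backslash i}+f_{\backslash j}$ with $f_{\backslash i}$ independent of $\bm{u}_i$ and $f_{\backslash j}$ independent of $\bm{u}_j$. I would apply the second mixed finite difference in the two arguments $\bm{u}_i$ and $\bm{u}_j$: evaluate $f_S$ at $(\bm{u}_i,\bm{u}_j)\in\{\bm{a},\bm{a}'\}\times\{\bm{b},\bm{b}'\}$ and form the alternating sum. This annihilates $f_{\backslash i}$ and $f_{\backslash j}$ (each is constant in one of the two arguments) and also annihilates groups (a)--(c) of $f_S$ (each is constant in at least one of $\bm{u}_i,\bm{u}_j$), so the alternating sum must equal the mixed difference of the cross term $(x_i+x_j)g(h(\bm{u}_i,\bm{u}_j))$ and therefore vanish for every choice of $\bm{a},\bm{a}',\bm{b},\bm{b}'$. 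But $h$ is by construction a non-additive function and $g$ a non-degenerate linear read-out, so $g(h(\cdot,\cdot))$ is not additively separable in its two arguments; choosing the four evaluation points along a direction that witnesses this joint dependence makes the mixed difference nonzero, a contradiction. Hence $f_S$ flags a pairwise statistical interaction between $i$ and $j$, which together with the previous paragraph yields the ``iff''.

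The step I expect to be the main obstacle is the last one: turning ``$h$ is non-additive'' into the quantitative statement that the cross term's mixed difference is genuinely nonzero, while ensuring that the linear maps $g,\psi,\phi$ and the node embeddings are non-degenerate enough that they do not accidentally cancel the joint dependence (for instance, $g$ projecting out the interacting component of $h$). I expect this to be discharged by carrying non-additivity of $h$ and non-degeneracy of $g,\psi,\phi$ as standing assumptions on SIGN, after which the remainder is the routine finite-difference annihilation above.
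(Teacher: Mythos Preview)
Your approach is essentially the paper's: both expand $f_S$ as a (weighted) sum of the pairwise terms $e_{pq}\,g(h(\bm{u}_p,\bm{u}_q))$, isolate the single cross term in $\{i,j\}$ from the remaining terms (each of which misses one of $\bm{u}_i,\bm{u}_j$), and argue that the additive decomposition of Definition~\ref{def:spi} exists iff that cross term is absent---the paper writes out explicit $q_{\backslash i},q_{\backslash j}$ for the contrapositive and then, for the forward direction, simply invokes ``$h$ is non-additive, so $h(\bm{u}_i,\bm{u}_j)$ cannot be written as $f_1(\bm{u}_i)+f_2(\bm{u}_j)$.'' Your mixed finite-difference device is just a cleaner operationalization of that last step (it also makes explicit why groups (a)--(c) cannot rescue additivity), and your flagging of the nondegeneracy of $g$ is a caveat the paper's proof glosses over---indeed its displayed expansion drops $g$ altogether.
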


Theorem \ref{thm:interaction} guarantees that SIGN will capture the interaction information from node pairs \textit{iff} they are linked by an edge. This ensures SIGN to accurately leverage the detected beneficial feature interactions for both inferring the target outcome and meanwhile providing useful feedback to the $L_0$ edge prediction component for better detection.

\subsubsection{Statistical Interaction in $L_0$-SIGN}
$L_0$-SIGN provides the same feature interaction modeling ability as SIGN, since we can simply extend Theorem \ref{thm:interaction} to Corollary \ref{coro:interaction} (Section \textit{C\iffalse\ref{appx:corollary}\fi} of Appendix gives the proof):

\begin{corollary}
\label{coro:interaction}
(\textbf{Statistical Interaction in $L_0$-SIGN}) Consider a graph $G$ that the edge set is unknown. Let $G$ be the input of $L_0$-SIGN function $F_{LS}(G)$ in Equation \ref{fun:l0_final_function}, the function flags pairwise statistical interaction between node $i$ and node $j$ if and only if they are predicted to be linked by an edge in $G$ by $L_0$-SIGN, i.e., $e^{'}_{ij}=1$.
\end{corollary}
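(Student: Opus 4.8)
The plan is to reduce Corollary~\ref{coro:interaction} directly to Theorem~\ref{thm:interaction}, exploiting the fact that, once the input graph $G=G(X,\emptyset)$ and the parameters $\bm{\omega},\bm{\theta}$ are fixed, the $L_0$ edge prediction component produces a \emph{deterministic} binary edge set to which SIGN is then applied verbatim. First I would verify that $E'=F_{ep}(X;\bm{\omega})$ is a legitimate edge set in the sense required by Theorem~\ref{thm:interaction}: its entries $e'_{ij}=f_{ep}(\bm{v}^e_i,\bm{v}^e_j)$ lie in $\{0,1\}$ since $f_{ep}:\mathbb{R}^{2\times b}\to\mathbb{Z}_2$, and they are symmetric, $e'_{ij}=e'_{ji}$, because $f_{ep}$ is imposed to be invariant to the order of its two arguments. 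Hence $G(X,E')$ is a well-formed graph with a binary, symmetric edge set, and the hypotheses of Theorem~\ref{thm:interaction} are met.

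Second, I would make the composition explicit. Substituting $e'_{ij}=f_{ep}(\bm{v}^e_i,\bm{v}^e_j)$ into the SIGN prediction function of Equation~\ref{fun:si_final_function} yields exactly the $L_0$-SIGN prediction function of Equation~\ref{fun:l0_final_function}; equivalently, this is precisely Equation~\ref{fun:l0_sign_general}, $f_{LS}(G(X,\emptyset);\bm{\theta},\bm{\omega})=f_S(G(X,F_{ep}(X;\bm{\omega}));\bm{\theta})$. Therefore, viewed as a function of the feature variables $(\bm{u}_1,\dots,\bm{u}_k)$, $f_{LS}(G)$ coincides with the SIGN function $f_S$ evaluated on the fixed graph $G(X,E')$.

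Third, I would invoke Theorem~\ref{thm:interaction} on $G(X,E')$: $f_S(G(X,E'))$ flags a pairwise statistical interaction between nodes $i$ and $j$ (in the sense of Definition~\ref{def:spi} and Equation~\ref{fun:spi_emb}) if and only if $e'_{ij}=1$. Combining this with the identity from the previous step gives the claim: $f_{LS}(G)$ flags a pairwise statistical interaction between $i$ and $j$ if and only if $e'_{ij}=1$, i.e., if and only if $L_0$-SIGN predicts an edge between them.

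The one point that needs genuine care — and is, I expect, the main (though mild) obstacle — is confirming that the edge prediction component cannot itself introduce or suppress a statistical interaction among the variables over which Definition~\ref{def:spi} is stated. This holds because $\bm{v}^e_i=\bm{o}_i\bm{W}^e$ depends only on the \emph{identity} of node $i$ (its one-hot encoding), not on the feature value $x_i$, so each predicted edge value $e'_{ij}$ is a constant with respect to $(\bm{u}_1,\dots,\bm{u}_k)$; consequently the only channel through which $x_i$ and $x_j$ can jointly affect the output of $f_{LS}$ is the gated term $e'_{ij}\,h(\bm{u}_i,\bm{u}_j)$, exactly as in SIGN. Hence the ``if and only if'' of Theorem~\ref{thm:interaction} transfers without change, and (since the statement concerns the deterministic inference-time function with $e'_{ij}\in\{0,1\}$, not the smoothed training surrogate) no further argument is needed.
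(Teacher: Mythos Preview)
Your proposal is correct and follows essentially the same approach as the paper: observe that $f_{LS}(G(X,\emptyset);\bm{\theta},\bm{\omega})=f_S(G(X,E');\bm{\theta})$ with $E'=F_{ep}(X;\bm{\omega})$, and then apply Theorem~\ref{thm:interaction} to the graph $G(X,E')$. Your write-up is in fact more careful than the paper's, since you explicitly verify that $E'$ is binary and symmetric and note that $e'_{ij}$ depends only on node identities (via $\bm{o}_i\bm{W}^e$) and not on the feature values $x_i$, so the edge prediction cannot itself create or destroy a statistical interaction in the sense of Definition~\ref{def:spi}.
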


\section{Experiments}
\label{sec:experiences}

We focuses on answering three questions: (i) how $L_0$-SIGN performs compared to baselines and whether SIGN helps detect more beneficial interactions for better performance?
(ii) How is the detection ability of $L_0$-SIGN?
(iii) Can the statistical interaction analysis results provide potential explanations for the recommendations predictions?

\subsection{Experimental Protocol}

\subsubsection{Datasets}

We study two real-world datasets for recommender systems to evaluate our model:

\noindent\textbf{Frappe} \cite{baltrunas2015frappe} is a context-aware recommendation dataset that records app usage logs from different users with eight types of contexts (e,g, weather). Each log is a graph (without edges), and nodes are user ID, app ID, or the contexts.

\noindent\textbf{MovieLens-tag} \cite{he2017neural} focuses on the movie tag recommendation (e.g., ``must-see"). Each data instance is a graph, with nodes as user ID, movie ID, and a tag that the user gives to the movie. 

To evaluate the question (ii), we further study two datasets for graph classification, which will be discussed later. The statistics of the datasets are summarized in Table \ref{tab:dataset}.

\begin{table}[t]
\centering
\small
\begin{sc}
\begin{tabular}{>{\arraybackslash}p{1.4cm}>{\centering\arraybackslash}p{1.5cm}>{\centering\arraybackslash}p{1.5cm}>{\centering\arraybackslash}p{2.3cm}}
\hline
\textbf{Dataset} & \textbf{\#Features} & \textbf{\#Graphs} & \textbf{\#Nodes/Graph} \\
\hline
Frappe     & 5,382 & 288,609  & 10  \\  
MovieLens  & 90,445 & 2,006,859 & 3  \\ 
Twitter    & 1,323 & 144,033 & 4.03 \\  
DBLP       & 41,324  & 19,456  & 10.48\\ 
\hline
\end{tabular}
\end{sc}
\caption{Dataset statistics. All datasets are denoted in graph form. Twitter and DBLP datasets are used for question (ii).}
\label{tab:dataset}
\end{table}

\subsubsection{Baselines}

We compare our model with recommender system baselines that model all feature interactions:

\noindent \textbf{FM} \cite{koren2008factorization}: It is one of the most popular recommendation algorithms that models every feature interactions.
\textbf{AFM} \cite{xiao2017attentional}: Addition to FM, it calculates an attention value for each feature interaction.
\textbf{NFM} \cite{he2017neural}: It replaces the dot product procedure of FM by an MLP.
\textbf{DeepFM} \cite{guo2017deepfm}: It uses MLP and FM for interaction analysis, respectively. 
\textbf{xDeepFM} \cite{lian2018xdeepfm}: It is an extension of DeepFM that models feature interactions in both explicit and implicit way.
\textbf{AutoInt} \cite{song2019autoint}: It explicitly models all feature interactions using a multi-head self-attentive neural network.
We use the same MLP settings in all baselines (if use) as our interaction modeling function $h$ in SIGN for fair comparison.

\subsubsection{Experimental Set-up}

In the experiments, we use element-wise mean as both linear aggregation functions $\psi(\cdot)$ and $\phi(\cdot)$. The linear function $g(\cdot)$ is a weighted sum function (i.e., $g(\bm{u_{i}^{'}})= \bm{w}_{g}^{T}\bm{u_{i}^{'}}$, where $\bm{w}_{g}\in\mathbb{R}^{d\times 1}$ are the weight parameters). 
For the interaction modeling function $h(\cdot)$, we use a MLP with one hidden layer after element-wise product: $h(\bm{u}_i, \bm{u}_j)=\bm{W}^{h}_2\sigma(\bm{W}^{h}_1(\bm{u}_i\odot\bm{u}_j) + \bm{b}^{h}_1) + \bm{b}^{h}_2$, where $\bm{W}^{h}_1, \bm{W}^{h}_2, \bm{b}^{h}_1, \bm{b}^{h}_2$ are parameters of MLP and $\sigma(\cdot)$ is a Relu activation function.
We implement the edge prediction model based on the neural collaborative filtering framework \cite{he2017neuralCF}, which has a similar form to $h(\cdot)$: $f_{ep}(\bm{v}^{e}_{i}, \bm{v}^{e}_{j})=\bm{W}^{e}_2\sigma(\bm{W}^{e}_1(\bm{v}^{e}_i\odot\bm{v}^{e}_j) + \bm{b}^{e}_1) + \bm{b}^{e}_2$.
We set node embedding sizes for both interaction modeling and edge prediction to 8 (i.e., $b,d=8$) and the sizes of hidden layer for both $h$ and $f_{ep}$ to 32. We choose the weighting factors $\lambda_1$ and $\lambda_2$ from $[1\times 10^{-5}, 1\times 10^{-1}]$ that produce the best performance in each dataset\footnote{Our implementation of our $L_0$-SIGN model is available at \href{https://github.com/ruizhang-ai/SIGN-Detecting-Beneficial-Feature-Interactions-for-Recommender-Systems}{https://github.com/ruizhang-ai/SIGN-Detecting-Beneficial-Feature-Interactions-for-Recommender-Systems}.}. 

Each dataset is randomly split into training, validation, and test datasets with a proportion of 70\%, 15\%, and 15\%. We choose the model parameters that produce the best results in validation set when the number of predicted edges being steady. We use accuracy (ACC) and the area under a curve with Riemann sums (AUC) as evaluation metrics.

\subsection{Model Performance}

Table \ref{tab:performance_recom} shows the results of comparing our model with recommendation baselines, with the best results for each dataset in bold. The results of SIGN are using all feature interactions as input, i.e., the input is a complete graph. 

Through the table, we observe that:
(i) $L_0$-SIGN outperforms all baselines. It shows $L_0$-SIGN's ability in providing accurate recommendations. Meanwhile, SIGN solely gains comparable results to competitive baselines, which shows the effectiveness of SIGN in modeling feature interactions for recommendation.
(ii) $L_0$-SIGN gains significant improvement from SIGN. It shows that more accurate predictions can be delivered by retaining only beneficial feature interactions and effectively modeling them. 
(iii) FM and AFM (purely based on dot product to model interactions) gain lower accuracy than other models, which shows the necessity of using sophisticated methods (e.g., MLP) to model feature interactions for better predictions.
(iv) The models that explicitly model feature interactions (xDeepFM, AutoInt, and $L_0$-SIGN) outperform those that implicitly model feature interactions (NFM, DeepFM). It shows that explicit feature interaction analysis is promising in delivering accurate predictions.

\begin{table}[t]
\centering
\small
\begin{sc}
\begin{tabular}{l|cc|cc}
\hline
 & \multicolumn{2}{c|}{\textbf{Frappe}} & \multicolumn{2}{c}{\textbf{MovieLens}} \\
 & auc & acc & auc & acc \\
\hline
FM      & 0.9263 & 0.8729 & 0.9190 & 0.8694\\
AFM     & 0.9361 & 0.8882 & 0.9205 & 0.8711 \\
NFM     & 0.9413 & 0.8928 & 0.9342 & 0.8903\\
DeepFM  & 0.9422 & 0.8931 & 0.9339 & 0.8895\\
xDeepFM & 0.9435 & 0.8950 & 0.9347 & 0.8906\\
AutoInt & 0.9432 & 0.8947 & 0.9351 & 0.8912\\
\hline
SIGN  & 0.9448 & 0.8974 & 0.9354 & 0.8921 \\
$L_0$-SIGN  & \textbf{0.9580} & \textbf{0.9174} & \textbf{0.9407} & \textbf{0.8970}\\
\hline
\end{tabular}
\end{sc}
\caption{Summary of results in comparison with baselines.}
\label{tab:performance_recom}
\end{table}

\subsection{Comparing SIGN with Other GNNs in Our Model}
To evaluate whether our $L_0$ edge prediction technique can be used on other GNNs and whether SIGN is more suitable than other GNNs in our model, we replace SIGN with existing GNNs: GCN \cite{kipf2016semi}, Chebyshev filter based GCN (Cheby) \cite{defferrard2016convolutional} and GIN \cite{xu2018powerful}.
We run on two datasets for graph classification since they contain heuristic edges (used to compare with the predicted edges):

\noindent\textbf{Twitter} \cite{pan2015cogboost} is extracted from twitter sentiment classification. Each tweet is a graph with nodes being word tokens and edges being the co-occurrence between two tokens in each tweet.

\noindent\textbf{DBLP} \cite{pan2013graph} consists of papers with labels indicating either they are from DBDM or CVPR field. Each paper is a graph with nodes being paper ID or keywords and edges being the citation relationship or keyword relations.

\begin{table}
\centering
\small
\begin{sc}
\begin{tabular}{l|cc|cc}
\hline
\multirow{2}{*}{} & \multicolumn{2}{c|}{\textbf{Twitter}} & \multicolumn{2}{c}{\textbf{DBLP}} \\
 & auc & acc & auc & acc \\
\hline
GCN  & 0.7049 & 0.6537 & 0.9719 & 0.9289 \\
$L_0$-GCN & 0.7053 & 0.6543 & 0.9731 & 0.9301  \\
\hline
CHEBY & 0.7076 & 0.6522 & 0.9717 & 0.9291 \\
$L_0$-CHEBY & 0.7079 & 0.6519 &  0.9719 & 0.9297 \\
\hline
GIN & 0.7149 & 0.6559 & 0.9764 & 0.9319 \\
$L_0$-GIN & 0.7159 & 0.6572 &  0.9787 & 0.9328 \\
\hline
SIGN & 0.7201 & 0.6615  & 0.9761 & 0.9316 \\
$L_0$-SIGN & \textbf{0.7231} & \textbf{0.6670} & \textbf{0.9836} & \textbf{0.9427} \\
\hline
\end{tabular}
\end{sc}
\caption{The results in comparison with existing GNNs. The model names without ``$L_0$-" use heuristic edges, and those with ``$L_0$-" automatically detect edges via our $L_0$ edge prediction technique.}
\label{tab:performance_graph}
\end{table}
\begin{figure}
\begin{tikzpicture}
  \centering
  \begin{small}
  \begin{axis}[
        ybar, axis on top,
        height=3.3cm, width=0.49\textwidth,
        bar width=0.23cm,
        major grid style={draw=white},
        ymin=-0.2, ymax=1,
        axis x line*=bottom,
        y axis line style={opacity=0},
        ylabel shift = -4 pt,
        tickwidth=0pt,
        enlarge x limits=true,
        legend image post style={scale=1.0},
        legend style={
            draw=none,
            at={(0.31,1.1)},
            anchor=north,
            legend columns=2,
            nodes={scale=0.9},
            /tikz/every even column/.append style={column sep=0.5cm}
        },
        ylabel={\textit{Imp.} (\%)},
        symbolic x coords={
           GCN, CHEBY, GIN, SIGN},
       xtick=data,
    ]
    \addplot [pattern = crosshatch dots] coordinates {
      (GCN, 0.057)
      (CHEBY, 0.042) 
      (GIN, 0.140)
      (SIGN, 0.417) 
      };
   \addplot [pattern=horizontal lines] coordinates {
      (GCN, 0.091)
      (CHEBY, -0.046) 
      (GIN, 0.198)
      (SIGN, 0.831)  };
   \addplot [pattern = grid, fill=gray!80] coordinates {
      (GCN, 0.123)
      (CHEBY, 0.021) 
      (GIN, 0.236)
      (SIGN, 0.737) };
    \addplot [pattern=crosshatch, fill=black!80] coordinates {
      (GCN, 0.129)
      (CHEBY, 0.064) 
      (GIN, 0.097)
      (SIGN, 0.996) };
    \legend{Twitter AUC,Twitter ACC, DBLP AUC, DBLP ACC}
  \end{axis}
  \end{small}
  \end{tikzpicture}
  \caption{The comparison of improvement via the $L_0$ edge prediction technique to using heuristic edges in Table \ref{tab:performance_graph}.}
  \label{fig:gnn_improvement}
\end{figure}
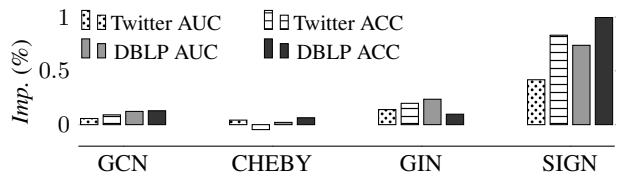

Table \ref{tab:performance_graph} shows the accuracies of each GNN that runs both on using the given (heuristic) edges (without ``$L_0$" in name) and on predicting edges with our $L_0$ edge prediction model (with ``$L_0$" in name). We can see that for all GNNs, $L_0$-GNNs gain competitive results comparing to corresponding GNNs. It shows that our model framework lifts the requirement of domain knowledge in defining edges in order to use GNNs in some situations. 
Also, $L_0$-SIGN outperforms other GNNs and $L_0$-GNNs, which shows $L_0$-SIGN's ability in detecting beneficial feature interactions and leveraging them to perform accurate predictions. 
Figure \ref{fig:gnn_improvement} shows each GNN's improvement from the results from ``GNN" to ``$L_0$-GNN" in Table \ref{tab:performance_graph}. It shows that among all the different GNN based models, SIGN gains the largest improvement from the $L_0$ edge prediction technique v.s. SIGN without $L_0$ edge prediction. It shows that SIGN can help the $L_0$ edge prediction model to better detect beneficial feature interactions for more accurate predictions.

\subsection{Evaluation of Interaction Detection}
\label{subsec:inter_detec_eval}
We then evaluate the effectiveness of beneficial feature interaction detection in $L_0$-SIGN.

\subsubsection{Prediction Accuracy vs. Numbers of Edges}
Figure \ref{fig:AUC_vs_edge_epoch} shows the changes in prediction accuracy and the number of edges included while training. 
The accuracy first increases while the number of included edges decreases dramatically. Then, the number of edges becomes steady, and the accuracy reaches a peak at similar epochs. It shows that our model can recognize unbeneficial feature interactions and remove them for better prediction accuracy.

\begin{figure}[ht]
\centering
\begin{tikzpicture}
\begin{axis}[mystyle2, title =Frappe, xlabel={Epochs}, xtick={1, 50, 100, 150, 200}, xmin=0, xmax=200, ymin=0.6, ymax=1.03, legend style={legend style={draw=none, at={(0.98,0.43)}},anchor=east, nodes={scale=0.7, transform shape}}, legend image post style={scale=0.6}]
    \addplot[mark=x] coordinates{
         (1,    0.9187)
    	(25,   0.9486)
    	(50,   0.9583)
    	(75,   0.9580)
    	(100,   0.9579)
    	(125,   0.9578)
    	(150,   0.9577)
    	(175,   0.9576)
    	(200,   0.9575)
    };
    \addplot[mark=square*] coordinates {
    	(1,    0.8858219)
    	(25,   0.9288478)
    	(50,   0.92854754)
    	(75,   0.92990583)
    	(100,   0.92959373)
    	(125,   0.92938583)
    	(150,   0.9294782)
    	(175,   0.92936277)
    	(200,   0.92920106)
    };
    \addplot[style={dashed},mark=*] coordinates{
    	(1,    0.9990)
    	(25,   0.9818)
    	(50,   0.6760)
    	(75,   0.6708)
    	(100,   0.6699)
    	(125,   0.66982)
    	(150,   0.6692)
    	(175,   0.6690)
    	(200,   0.6689)
    };
    \legend{AUC, ACC, Edges (\%)}
\end{axis}
\end{tikzpicture}
\begin{tikzpicture}
\begin{axis}[mystyle2, title =Movielens, xlabel={Epochs}, xtick={1,100,200,300}, xmin=0, xmax=300, ymin=0.55, ymax=1.03, legend style={draw=none, at={(0.98,0.39)},anchor=east, nodes={scale=0.7, transform shape}}, legend image post style={scale=0.6}]
\addplot[mark=x] coordinates {
	(1,    0.9099)
	(50,   0.9374)
	(100,   0.9399)
	(150,  0.9398)
	(200,  0.9407)
	(250,  0.9397)
	(300,  0.9395)
};
\addplot[mark=square*] coordinates {
	(1,    0.8490696)
	(50,   0.89700883)
	(100,   0.8967365)
	(150,  0.89678907)
	(200,  0.89443364)
	(250,  0.89496193)
	(300,  0.89406296)
};
\addplot[style={dashed}, mark=*] coordinates{
	 (1,    1.0)
	(50,   0.97)
	(100,  0.63)
	(150,  0.5961)
	(200,  0.5947)
	(250,  0.59407)
	(300, 0.5935)
};\legend{AUC, ACC, Edges (\%)}
\end{axis}
\end{tikzpicture}
\begin{tikzpicture}
\begin{axis}[mystyle2, title =Twitter, xlabel={Epochs}, xtick={1,50,100,150,200}, xmin=0, xmax=200, ymin=0.55, ymax=1.03, legend style={draw=none, at={(0.98,0.75)},anchor=east, nodes={scale=0.7, transform shape}}, legend image post style={scale=0.6}]
\addplot[mark=x] coordinates {
	(1,    0.69488)
	(25,   0.72143)
	(50,  0.72351)
	(75,  0.72337)
	(100,  0.72324)
	(125,  0.72311)
	(150,  0.72300)
	(175,  0.72291)
	(200,  0.72284)
};
\addplot[mark=square*] coordinates {
    (1,    0.639477)
	(25,   0.6651604)
	(50,  0.6670215)
	(75,  0.66660494)
	(100,  0.6657255)
	(125,  0.66498494)
	(150,  0.6652164)
	(175,  0.66475356)
	(200,  0.6651238)
};
\addplot[style={dashed}, mark=*] coordinates{
	 (1,    1.0)
	(25,   0.9518)
	(50,  0.7479)
	(75,  0.7464)
	(100,  0.7453)
	(125,  0.7446)
	(150,  0.7443)
	(175,  0.7440)
	(200,  0.7437)
};\legend{AUC, ACC, Edges (\%)}
\end{axis}
\end{tikzpicture}
\begin{tikzpicture}
\begin{axis}[mystyle2, title =DBLP, xlabel={Epochs}, xtick={1,50,100,150,200,250}, xmin=0, xmax=250, ymin=0.55, ymax=1.03, legend style={draw=none, at={(0.52,0.29)},anchor=east, nodes={scale=0.7, transform shape}}, legend image post style={scale=0.6}]
\addplot[mark=x] coordinates {
    (1,    0.9565)
	(25,   0.9808)
	(50,   0.98301)
	(75,   0.9836)
	(100,   0.9827)
	(125,   0.9824)
	(150,   0.98123)
	(175,   0.98109)
	(200,   0.98100)
	(225,   0.98093)
	(250,   0.98082)
};
\addplot[mark=square*] coordinates {
	(1,    0.911826)
	(25,   0.93427954)
	(50,   0.9369647)
	(75,   0.94274605)
	(100,   0.9421963)
	(125,   0.9422518)
	(150,   0.9405944)
	(175,   00.9402518)
	(200,   0.9395944)
	(225,   0.93727954)
	(250,   0.936937)
};
\addplot[style={dashed}, mark=*] coordinates{
	(1,    0.9913)
	(25,   0.9990)
	(50,   0.9963)
	(75,   0.9818)
	(100,   0.9124)
	(125,   0.8002)
	(150,   0.7522)
	(175,   0.7314)
	(200,   0.7203)
	(225,   0.7131)
	(250,   0.7084)
};\legend{AUC, ACC, Edges (\%)}
\end{axis}
\end{tikzpicture}
\caption{The changes in prediction accuracy and number of edges (in percentage) while training.}
\label{fig:AUC_vs_edge_epoch}
\end{figure}
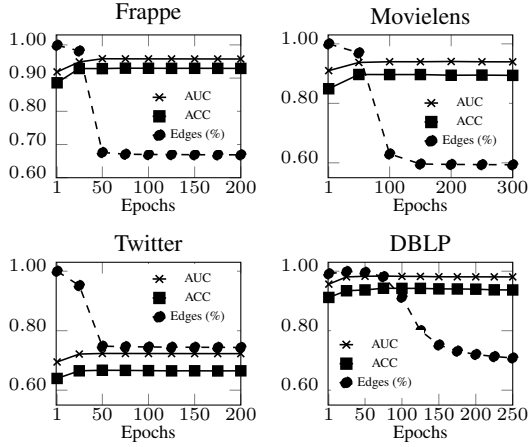

\subsubsection{Predicted Edges vs. Reversed Edges} 

We evaluate how \textit{predicted edges} and \textit{reversed edges} (the excluded edges) influence the performance. We generate 5 edge sets with a different number of edges by randomly selecting from 20\% predicted edges (ratio 0.2) to 100\% predicted edges (ratio 1.0). We generate another 5 edge sets similarly from reversed edges. Then, we run SIGN on each edge set for 5 times, and show the averaged results in Figure \ref{fig:auc_in_edges}. 
It shows that using predicted edges always gets higher accuracy than using reversed edges. The accuracy stops increasing when using reversed edges since the ratio 0.6, while it continually improves when using more predicted edges. According to Definition \ref{def:rfi}, the detected edges are proved beneficial since considering more of them can provide further performance gain and the performance is the best when using all predicted edges, while considering the reversed edges cannot. Note that the increment of reversed edges from 0.2 to 0.6 may come from covering more nodes since features solely can provide some useful information for prediction.
\begin{figure}[t]
\centering
        \begin{tikzpicture}
        \begin{axis}[mystyle2, title=Frappe, xlabel={Ratio}, xtick = {0.2,0.4,...,1.0},legend style={draw=none, at={(0.99,0.17)},anchor=east, nodes={scale=0.68, transform shape}}, legend image post style={scale=0.5}, legend columns=2, xmin=0.2,xmax=1.0, ymin=0.79]
        \addplot[mark=diamond*] coordinates {
            (0.2,   0.9115)
        	(0.4,   0.9449)
        	(0.6,   0.9502)
        	(0.8,  0.9559)
        	(1.0,  0.95803)
        };
        \addplot[color=gray, mark=triangle*] coordinates{
            (0.2,   0.9103)
        	(0.4,   0.9337)
        	(0.6,  0.9336)
        	(0.8,  0.9350)
        	(1.0,  0.9354)
        };
         \addplot[mark=*] coordinates {
            (0.2,   0.8461)
            (0.4,   0.8916)
            (0.6,   0.9165)
            (0.8,   0.9276)
            (1.0,   0.9299)
        };
        \addplot[color=gray, mark=x] coordinates{
            (0.2,   0.8432)
            (0.4,   0.8723)
            (0.6,   0.8713)
            (0.8,  0.8718)
            (1.0,  0.8715)
        };
        \legend{Pred (AUC), Rev (AUC), Pred (ACC), Rev (ACC)}
        \end{axis}
        \end{tikzpicture}
        \begin{tikzpicture}
        \begin{axis}[mystyle2, title=Movielens, xlabel={Ratio}, xtick = {0.2,0.4,...,1.0},legend style={draw=none, at={(0.99,0.17)},anchor=east, nodes={scale=0.68, transform shape}}, legend image post style={scale=0.5},legend columns=2, xmin=0.2,xmax=1.0, ymin=0.78]
        \addplot[mark=diamond*] coordinates {
            (0.2,   0.8932)
        	(0.4,   0.9203)
        	(0.6,  0.9369)
        	(0.8,  0.9391)
        	(1.0,  0.9407)
        };
        \addplot[color=gray, mark=triangle*] coordinates{
            (0.2,   0.8903)
        	(0.4,   0.9034)
        	(0.6,  0.9071)
        	(0.8,  0.9072)
        	(1.0,  0.90729)
        };
        \addplot[mark=*] coordinates {
            (0.2,   0.8287)
            (0.4,   0.8578)
            (0.6,  0.8824)
            (0.8,  0.8945)
            (1.0,  0.8972)
        };
        \addplot[color=gray, mark=x] coordinates{
            (0.2,   0.8219)
            (0.4,   0.8508)
            (0.6,  0.8582)
            (0.8,  0.8583)
            (1.0,  0.8583)
        };
        \legend{Pred (AUC), Rev (AUC), Pred (ACC), Rev (ACC)}
        \end{axis}
        \end{tikzpicture}
        \begin{tikzpicture}
        \begin{axis}[mystyle2, title=Twitter, xlabel={Ratio}, xtick = {0.2,0.4,...,1.0},legend style={draw=none, at={(0.99,0.17)},anchor=east, nodes={scale=0.68, transform shape}}, legend image post style={scale=0.5},legend columns=2, xmin=0.2,xmax=1.0, ymin=0.59]
        \addplot[mark=diamond*] coordinates {
            (0.2,   0.686)
            (0.4,   0.709)
            (0.6,   0.7184)
            (0.8,   0.7228)
            (1.0,   0.7231)
        };
        \addplot[color=gray, mark=triangle*] coordinates{
            (0.2,   0.683)
            (0.4,   0.697)
            (0.6,   0.704)
            (0.8,  0.704)
            (1.0,  0.704)
        };
        \addplot[mark=*] coordinates {
            (0.2,   0.6376)
            (0.4,   0.6488)
            (0.6,   0.6571)
            (0.8,   0.6645)
            (1.0,   0.6670)
        };
        \addplot[color=gray, mark=x] coordinates{
            (0.2,   0.6343)
            (0.4,   0.6409)
            (0.6,   0.6416)
            (0.8,  0.642)
            (1.0,  0.6418)
        };
        \legend{Pred (AUC), Rev (AUC), Pred (ACC), Rev (ACC)}
        \end{axis}
        \end{tikzpicture}
        \begin{tikzpicture}
        \begin{axis}[mystyle2, title=DBLP, xlabel={Ratio}, xtick = {0.2,0.4,...,1.0},legend style={draw=none, at={(0.99,0.17)},anchor=east, nodes={scale=0.68, transform shape}}, legend image post style={scale=0.5},legend columns=2, xmin=0.2,xmax=1.0, ymin=0.885]
        \addplot[mark=diamond*] coordinates {
            (0.2,   0.9657)
            (0.4,   0.9771)
            (0.6,  0.9808)
            (0.8,  0.9827)
            (1.0,  0.9836)
        };
        \addplot[color=gray, mark=triangle*] coordinates{
            (0.2,   0.9639)
            (0.4,   0.9735)
            (0.6,  0.9767)
            (0.8,  0.9768)
            (1.0,  0.9768)
        };
        \addplot[mark=*] coordinates {
            (0.2,   0.9165)
            (0.4,   0.9353)
            (0.6,  0.9381)
            (0.8,  0.9406)
            (1.0,  0.9427)
        };
        \addplot[color=gray, mark=x] coordinates{
            (0.2,   0.9117)
            (0.4,   0.9311)
            (0.6,  0.9361)
            (0.8,  0.9357)
            (1.0,  0.9359)
        };
        \legend{Pred (AUC), Rev (AUC), Pred (ACC), Rev (ACC)}
        \end{axis}
        \end{tikzpicture}
    \caption{Evaluating different number of edges. ``Pred" is the predicted edges and ``Rev" is the reversed edges.}
    \label{fig:auc_in_edges}
\end{figure}
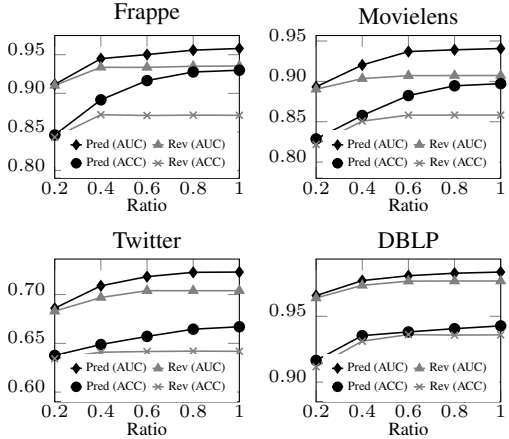

\begin{figure}
    \centering
    \includegraphics[width=0.95\columnwidth]{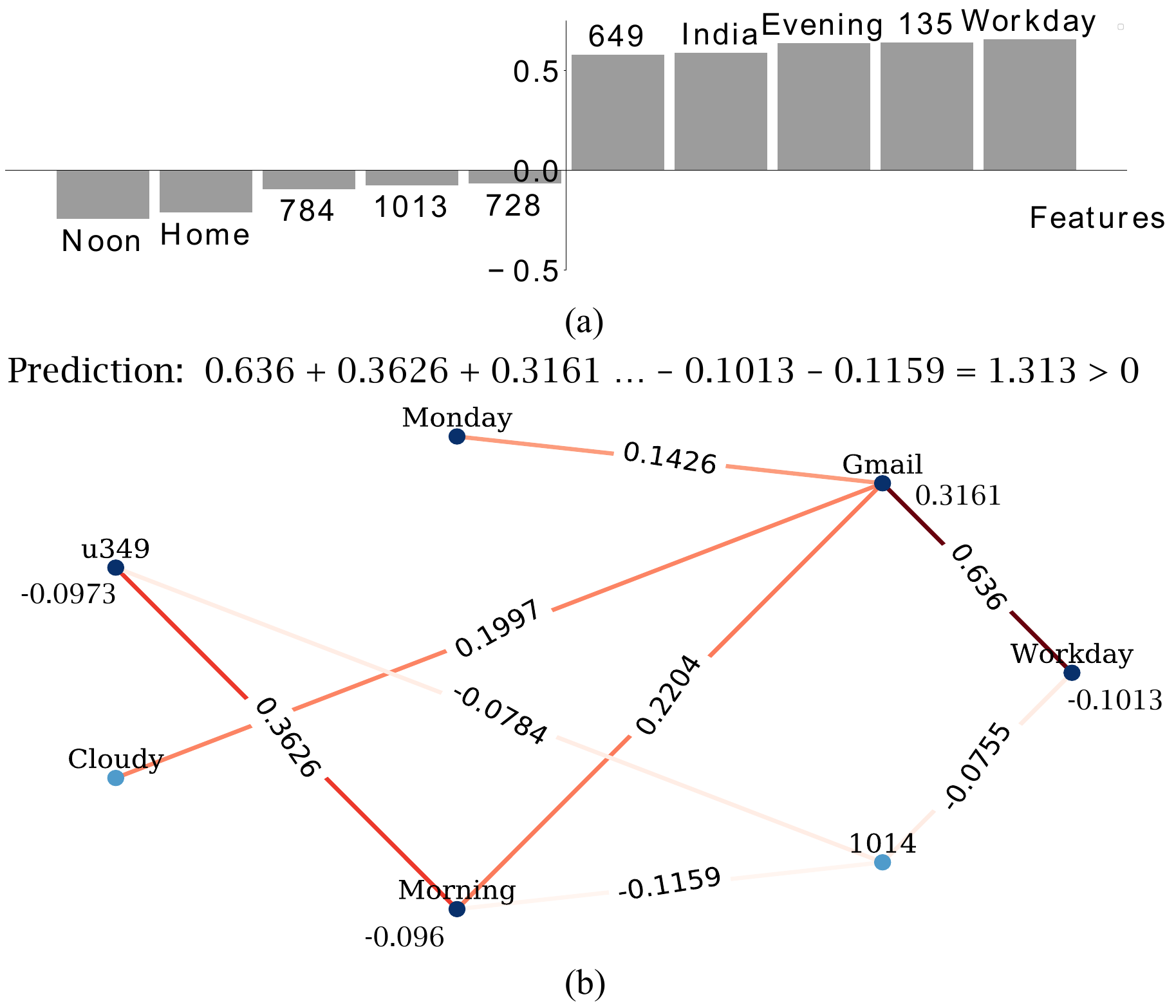}
    \caption{(a) The highest interaction values with \textit{Gmail}. The numbers are city indexes (e.g., 1014). (b) The prediction that the user \textit{u349} will use \textit{Gmail} as the prediction value is 1.313 $>$ 0. Darker edges are higher interaction values.}
    \label{fig:case_study}
\end{figure}

\subsection{Case Study}

Another advantage of interaction detection is to automatically discover potential explanations about recommendation predictions. 
We conduct case studies on the Frappe dataset to show how $L_0$-SIGN provides the explanations.

We first show the beneficial interactions that have the highest interaction values with \textit{Gmail} in Figure \ref{fig:case_study}a. We can see that \textit{Workday} has a high positive value, while \textit{Home} has a high negative value. It may show that Gmail is very likely to be used in workdays since people need to use Gmail while working, whereas Gmail is not likely to be used at home since people may not want to dealing with email while resting.
We then show how $L_0$-SIGN provides potential explanations for the prediction on each data sample.
Figure \ref{fig:case_study}b visualizes a prediction result that a user (\textit{u349}) may use \textit{Gmail}. The graph shows useful information for explanations. Despite the beneficial interactions such as $<\textit{Gmail}, \textit{Workday}>$ that have discussed above, we can also see that \textit{Cloudy} and \textit{Morning} have no beneficial interaction, meaning that whether it is a cloudy morning does not contribute to decide the user's will of using Gmail.

\section{Conclusion and Future Work}
\label{sec:conclusion}
We are the first to propose and formulate the problem of detecting beneficial feature interactions for recommender systems.
We propose $L_0$-SIGN to detect the beneficial feature interactions via a graph neural network approach and $L_0$ regularization.
Theoretical analyses and extensive experiments show the ability of $L_0$-SIGN in detecting and modeling beneficial feature interactions for accurate recommendations.
In future work, we will model high-order feature interactions in GNNs with theoretical foundations to effectively capture high-order feature interaction information in graph form.

\section*{Acknowledgments}
This work is supported by the China Scholarship Council.

\bibliography{ref}
\bibstyle{aaai21}

\newpage
\quad
\newpage
\appendix

\section{Derivation from IB to $L_0$-SIGN}
\label{appx:ib2sign}

Recall that the empirical risk minimization procedure of $L_0$-SIGN in Equation \ref{fun:l0_sign_loss} is:
\begin{equation*}
\begin{split}
    \mathcal{R}(\bm{\theta}, \bm{\omega})=&\frac{1}{N}\sum_{n=1}^{N}(\mathcal{L}(F_{LS}(G_n;\bm{\omega},\bm{\theta}),y_{n}) \\
    &+ \lambda_1 \sum_{i,j\in X_n}(\pi_{n})_{ij} + \lambda_2 \norm{\bm{z}_{n}}_2), \\
\end{split}
\end{equation*}

Deep variational information bottleneck method \cite{alemi2016deep} performs a variational approximation to the Information Bottleneck principle (Equation \ref{eq:ib}). Specifically, the function can be approximated by maximizing a lower bound $L$:
\begin{equation}
\label{eq_ib_lower_bound}
\begin{split}
    L \approx &\frac{1}{N} \sum_{n=1}^{N}[\int \mathop{d\tilde{s}_n} p(\tilde{s}_n\mid \tilde{x}_{n})\log q(\tilde{y}_n\mid \tilde{s}_n) \\
    &- \beta p(\tilde{s}_n\mid \tilde{x}_n)\log \frac{p(\tilde{s}_n\mid \tilde{x}_n)}{r(\tilde{s}_n)}],
\end{split}
\end{equation}
where $\tilde{x}_n,\tilde{y}_n,\tilde{s}_n$ are input, output and the middle state respectively, $r(\tilde{s}_n)$ is the variational approximation to the marginal $p(\tilde{s}_n)$.

Then, maximizing the lower bound $L$ equals to minimizing the $J_{IB}$:
\begin{equation}
\begin{split}
\label{eq_ib_J}
J_{IB} = \frac{1}{N} &\sum_{n=1}^{N}(\mathbb{E}_{\tilde{s}_n\sim p(\tilde{s}_n\mid \tilde{x}_n)} [-\log q(\tilde{y}_n\mid \tilde{s}_n)] \\
& + \beta KL [p(\tilde{s}_n\mid \tilde{x}_n), {r(\tilde{s}_n)}]),
\end{split}
\end{equation}
where $KL [p(\tilde{s}_n\mid \tilde{x}_n), {r(\tilde{s}_n)}]$ is the Kullback--Leibler divergence between $p(\tilde{s}_n\mid \tilde{x}_n)$ and $r(\tilde{s}_n)$.

In $L_0$-SIGN, the middle state part between input and output is the statistical interaction analysis result $\bm{s}_{n}$, which is the multiplication of predicted edge values $\bm{e}^{'}_{n}$ (the vector form of $E^{'}_{n}$) and interaction modeling results $\bm{z}_n$. $(e^{'}_{n})_{ij}=Bern((\pi_n)_{ij})$ so that $\bm{e}^{'}_{n}$ is a multivariate Bernoulli distribution, denoted as $p(\bm{e}^{'}_n\mid X_n)$. Similarly, $(\bm{z}_{n})_{ij}$ is a multivariate normal distribution $\mathcal{N}((\bm{z}_n)_{ij}, \Sigma_{ij})$ so that $\bm{z}_n$ is a multivariate normal distribution, denoted as $p(\bm{z}_n\mid X_n)$. Therefore, the distribution of $\bm{s}_n$ (denoted as $p(\bm{s}_n\mid X_n)$) is represented as:
\begin{equation}
\label{eq_ib_SIGN_pairwise}
\begin{split}
    p(\bm{s}_n\mid X_n) &= p(\bm{e}^{'}_n\mid X_n) p(\bm{z}_n\mid X_n) \\
    &= \mathbin\Vert_{i,j\in X_n} [Bern(\pi_{ij})\mathcal{N}((\bm{z}_n)_{ij}, \Sigma_{ij})],
\end{split}
\end{equation}
where $\Sigma_{ij}$ is a covariance matrix and $\mathbin\Vert$ is concatenation. 

Meanwhile, we set the variational approximation of the $\bm{s}_n$ being a concatenated multiplication of normal distributions with mean of $0$ and variance of $1$, and Bernoulli distributions that the probability of being $0$ is $1$. Then, the variational approximation in the vector form is:
\begin{equation}
\label{eq_ib_variational_apprx}
r(\bm{s}_n) = Bern(\bm{0}) \mathcal{N}(\bm{0}, \bm{I}),
\end{equation}
where $I$ is an identity matrix.

Combining Equation \ref{eq_ib_SIGN_pairwise} and Equation \ref{eq_ib_variational_apprx} into Equation \ref{eq_ib_J}, the minimization function correlating to $L_0$-SIGN becomes:
\begin{equation}
\label{eq_ib_merge1}
\begin{split}
    J_{IB} = \frac{1}{N} \sum_{n=1}^{N}&(\mathbb{E}_{\bm{s}_n\sim p(\bm{s}_n\mid X_n)} [-\log q(y_n\mid \bm{s}_n)] \\
    & + \beta KL[p(\bm{s}_n\mid X_n), r(\bm{s}_n)]).
\end{split}
\end{equation}

Next, we use the forward Kullback–Leibler divergence $KL[r(\bm{s}_n), p(\bm{s}_n\mid X_n)]$ to approximate the reverse Kullback–Leibler divergence in Equation \ref{eq_ib_merge1} to ensure the Kullback–Leibler divergence can be properly derived into the $L_0$ and $L_2$ activation regularization (will be illustrated in Equation \ref{eq_ib_kl_l0_approx} and Equation \ref{eq_ib_kl_l2}). We can perform this approximation because when the variational approximation $r(\tilde{z}_n)$ only contains one mode (e.g., Bernoulli distribution, normal distribution), both forward and reverse Kullback–Leibler divergence force $p(\tilde{z}_n\mid \tilde{x}_n)$ to cover the only mode and will have the same effect \cite{mackay2003information}. Then Equation \ref{eq_ib_merge1} becomes:
\begin{equation}
\begin{split}
\label{eq_ib_merge2}
& J_{IB}   \\
\approx & \frac{1}{N} \sum_{n=1}^{N}(\mathbb{E}_{\bm{s}_n\sim p(\bm{s}_n\mid X_n)} [-\log q(y_n\mid \bm{s}_n)] \\
&+ \beta KL [r(\bm{s}_n), p(\bm{s}_n\mid X_n)]) \\
= &\frac{1}{N} \sum_{n=1}^{N}(\mathbb{E}_{\bm{s}_n\sim p(\bm{s}_n\mid X_n)} [-\log q(y_n\mid \bm{s}_n)] \\
&+ \beta KL [Bern(\bm{0}) \mathcal{N}(\bm{0}, \bm{I}) , p(\bm{e}^{'}_n\mid X_n) p(\bm{z}_n\mid X_n)]) \\
= & \frac{1}{N} \sum_{n=1}^{N}(\mathbb{E}_{\bm{s}_n\sim p(\bm{s}_n\mid X_n)} [-\log q(y_n\mid \bm{s}_n)] \\
& + \beta (dKL [Bern(\bm{0}), p(\bm{e}^{'}_n\mid X_n)] \\
& \quad\quad + KL[\mathcal{N}(\bm{0}, \bm{I}), p(\bm{z}_n\mid X_n)])),
\end{split}
\end{equation}
where $d$ is the dimention of each vector $(\bm{z}_{n})_{ij}$.

Minimizing $\mathbb{E}_{\bm{s}_n\sim p(\bm{s}_n\mid X_n)} [-\log q(y_n\mid \bm{s}_n)]$ in Equation \ref{eq_ib_merge2} is equivalent to minimizing $\mathcal{L}(f_{LS}(G_n; \bm{\omega}, \bm{\theta}),y_{n})$ in Equation \ref{fun:l0_sign_loss}: its the negative log likelihood of the prediction as the loss function of $L_0$-SIGN, with $p(\bm{s}_n\mid X_n)$ being the edge prediction procedure and interaction modeling procedure, and $q(y_n\mid \bm{s}_n)$ being the aggregation procedure from the statistical interaction analysis result to the outcome $y_n$.

For the part $KL [Bern(\bm{0}), p(\bm{e}^{'}_n\mid X_n)]$ in Equation \ref{eq_ib_merge2}, $p(\bm{e}^{'}_n\mid X_n)=Bern(\bm{\pi}_n)$ is a multivariate Bernoulli distribution, so the KL divergence is:
\begin{equation}
\label{eq_ib_kl_l0}
\begin{split}
& KL [Bern(\bm{0}), p(\bm{e}^{'}_n\mid X_n)] = KL [Bern(\bm{0}), Bern(\bm{\pi}_n)]\\
 = &\sum_{i,j\in X_n}( 0 \log \frac{0}{(\pi_n)_{ij}} + (1 - 0) \log \frac{1-0}{1-(\pi_n)_{ij}} )\\
 = &\sum_{i,j\in X_n} \log \frac{1}{1-(\pi_n)_{ij}}.
\end{split}
\end{equation}

\begin{figure}[t]
\begin{center}
\begin{tikzpicture} 
\begin{axis}[xmin=0, xmax=1, ymax=1.2, samples=500, legend style={font=\small, at={(0.04,0.85)}, anchor=west},xlabel={$\pi$},width=0.42\textwidth]
\addplot[color=red]{log10(1/(1-x))};
\addplot[color=green]{x};
\legend{Bernoulli KL Divergence, Linear Approximation ($\gamma=1$)}
\end{axis}
\end{tikzpicture}
\caption{Linear Approximation vs. Bernoulli KL Divergence on different $\pi$ values.}
\label{fig_l0_approx}
\end{center}
\end{figure}
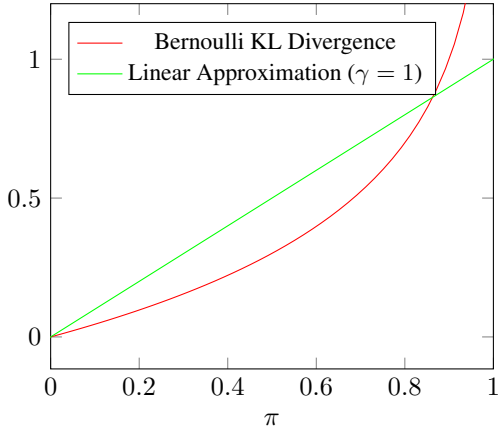

Next, we use a linear function $\gamma (\pi_n)_{ij}$ to approximate $\log \frac{1}{1-(\pi_n)_{ij}}$ in Equation \ref{eq_ib_kl_l0}, where $\gamma>0$ is a scalar constant. Figure \ref{fig_l0_approx} shows the values (penalization) of the Bernoulli KL divergence and its linear approximation on different $\pi$ values. It can be seen that both Bernoulli KL divergence and its approximations are monotonically increasing. In the empirical risk minimization procedure, they will have similar effects on penalizing those $\pi>0$. In addition, the approximation is more suitable for our model because: (i) it penalizes more than the Bernoulli KL divergence when $\pi$ is approaching $0$ (take more effort on removing unbeneficial feature interactions); and (ii) it gives reasonable (finite) penalization when $\pi$ is approaching $1$ (retrain beneficial feature interactions), while the Bernoulli KL divergence produces infinite penalization when $\pi=1$. 

Then the KL divergence of the multivariate Bernoulli distribution can be approximately calculated by:
\begin{equation}
\label{eq_ib_kl_l0_approx}
\small
\begin{split}
    KL [Bern(\bm{0}), p(\bm{e}^{'}_n\mid X_n)] &= \sum_{i,j\in X_n} \log \frac{1}{1-(\pi_n)_{ij}} \\
    &\approx \sum_{i,j\in X_n} \gamma (\pi_n)_{ij}.
\end{split}
\end{equation}

For the part $ KL[\mathcal{N}(\bm{0}, \bm{I}), p(\bm{z}_n\mid X_n)]$, the distribution $p(\bm{z}_n\mid X_n)$ is a multivariate normal distribution and is denoted as $\mathcal{N}(\bm{z}_n, \Sigma)$. If we assume all normal distributions in $p(\bm{z}_n\mid X_n)$ are i.i.d, and have the same variance (i.e., $\Sigma=diag(\sigma^{2}, \sigma^{2},\dots, \sigma^{2})$ where $\sigma$ is a constant), we can reformulate the KL divergence: 
\begin{equation}
\label{eq_ib_kl_l2}
\small
\begin{split}
& KL[\mathcal{N}(\bm{0}, \bm{I}), p(\bm{z}_n\mid X_n)] = KL[\mathcal{N}(\bm{0}, \bm{I}), \mathcal{N}(\bm{z}_n, \Sigma_n)] \\
 = &\frac{1}{2} (\Tr(\Sigma_n^{-1}\bm{I}) + (\bm{z}_n - \bm{0})^{T}\Sigma_{1}^{-1}(\bm{z}_n - \bm{0}) + \ln{\frac{\det\Sigma_n}{\det\bm{I}}} - d|X_n| ) \\
  = &\frac{1}{2} \sum_{i,j\in X_n}\sum_{k=1}^{d}(\frac{1}{\sigma^{2}} + \frac{(z_n)_{ijk}^{2}}{\sigma^{2}} + \ln{\sigma^{2}} - 1) \\
  = &\frac{1}{2\sigma^{2}} \sum_{i,j\in X_n}\sum_{k=1}^{d}((z_n)_{ijk}^{2} + C_2) \\
  \propto &\frac{1}{2\sigma^{2}} \sum_{i,j\in X_n}\sum_{k=1}^{d}(z_n)_{ijk}^{2},
\end{split}
\end{equation}
where $C_2=1 + \sigma^{2}\ln{\sigma^{2}} - \sigma^{2}$ is a constant and $(z_n)_{ijk}$ is the $k$th dimension of $(\bm{z}_n)_{ij}$. 

Relating to Equation \ref{fun:l0_sign_loss}, Equation \ref{eq_ib_kl_l0_approx} is exactly the $L_0$ activation regularization part and Equation \ref{eq_ib_kl_l2} is the $L_2$ activation regularization part in the empirical risk minimization procedure of $L_0$-SIGN, with $\lambda_1 = d\beta\gamma$ and $\lambda_2 = \frac{\beta}{2\sigma^{2}}$. Therefore, the empirical risk minimization procedure of $L_0$-SIGN is proved to be a variational approximation of minimizing the object function of IB ($J_{IB}$):
\begin{equation}
\label{eq_sign_is_ib}
 \min \mathcal{R}(\bm{\theta}, \bm{\omega}) \approx \min J_{IB}. 
\end{equation}

\section{Proof of Theorem \ref{thm:interaction}}
\label{appx:prove_theorem}

\textbf{Theorem 1.} \textit{(\textbf{Statistical Interaction in SIGN}) Consider a graph $G(X, E)$, where $X$ is the node set and $E=\{e_{ij}\}_{ i,j \in X}$ is the edge set that $e_{ij}\in\{0,1\}, e_{ij}=e_{ji}$. Let $G(X, E)$ be the input of SIGN function $f_{S}(G)$ in Equation \ref{fun:si_final_function}, then the function flags pairwise statistical interaction between node $i$ and node $j$ if and only if they are linked by an edge in $G(X, E)$, i.e., $e_{ij}=1$.}

\begin{proof}
We prove Theorem \ref{thm:interaction} by proving two lemmas:
\begin{lemma}
\label{lemma_inte2edge}
Under the condition of Theorem \ref{thm:interaction}, for a graph $G(X, E)$, if $f_{S}(G)$ shows pairwise statistical interaction between node $i$ and node $j$, where $i,j\in X$, then the two nodes are linked by an edge in $G(X, E)$, i.e., $e_{ij}=1$.
\end{lemma}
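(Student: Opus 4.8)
The plan is to prove Lemma~\ref{lemma_inte2edge} by contraposition: I would assume $e_{ij}=0$ --- and hence $e_{ji}=0$, since $E$ is symmetric --- and produce an explicit additive split $f_{S}(G)=f_{\backslash i}+f_{\backslash j}$ of the form demanded by Definition~\ref{def:spi} (Equations~\ref{fun:spi}--\ref{fun:spi_emb}), thereby showing that $f_{S}$ flags \emph{no} pairwise statistical interaction between $i$ and $j$.

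First I would linearise $f_{S}$. Since $\phi$, $\psi$ and $g$ are all linear (element-wise sum/mean pooling and a weighted-sum read-out), they can be pushed through the two aggregation sums in Equation~\ref{fun:si_final_function}, yielding
\begin{equation*}
f_{S}(G)=\sum_{a\in X}\sum_{b\in X} c_{ab}\, e_{ab}\, T_{a}(\bm{u}_{a},\bm{u}_{b}),
\end{equation*}
where the $c_{ab}>0$ are fixed constants coming from the pooling normalisation and $T_{a}(\bm{u}_{a},\bm{u}_{b})=x_{a}\,g(h(\bm{u}_{a},\bm{u}_{b}))$, the factor $x_{a}$ appearing because $\bm{u}_{a}'=x_{a}\bm{v}_{a}'$. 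The two facts I need from this expansion are: (i) each summand, viewed as a function of the $k$ inputs $x_{1},\dots,x_{k}$ (equivalently $\bm{u}_{1},\dots,\bm{u}_{k}$), depends only on $x_{a}$ and $x_{b}$; and (ii) each summand vanishes identically whenever $e_{ab}=0$.

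Then I would fix the split by organising the double sum around the index $i$: put into $f_{\backslash j}$ every summand with $a=i$ or $b=i$, and into $f_{\backslash i}$ all the rest (those with $a\neq i$ and $b\neq i$). These two families partition the sum, so $f_{S}(G)=f_{\backslash i}+f_{\backslash j}$. By construction $f_{\backslash i}$ contains no term depending on $x_{i}$. In $f_{\backslash j}$ the only summands that could depend on $x_{j}$ are $(a,b)=(i,j)$ and $(a,b)=(j,i)$; but these carry the factor $e_{ij}=e_{ji}=0$, so by (ii) they are identically zero and may be dropped, leaving $f_{\backslash j}$ free of $x_{j}$. This is exactly the decomposition of Definition~\ref{def:spi}, so $e_{ij}=0$ implies no statistical interaction between $i$ and $j$ --- the contrapositive of the lemma.

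The hard part will be making the linearisation step watertight: it relies on $\phi$, $\psi$, $g$ being genuinely linear (any bias term must be a constant that can be absorbed into, say, $f_{\backslash i}$, and a non-linear read-out such as max-pooling would destroy the sum-over-pairs form) and on $h(\bm{u}_{a},\bm{u}_{b})$ depending on nothing beyond $\bm{u}_{a}$ and $\bm{u}_{b}$. The one other hypothesis that must be invoked explicitly is the symmetry $e_{ij}=e_{ji}$: without it the $(j,i)$ term would survive and could not be placed in either $f_{\backslash i}$ or $f_{\backslash j}$. Symmetry of $h$ itself is not needed for this direction of the theorem.
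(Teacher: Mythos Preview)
Your proposal is correct and follows essentially the same route as the paper: a contrapositive (the paper phrases it as contradiction) argument that linearises $f_S$ into a double sum over ordered pairs and then splits it into ``terms not touching $i$'' plus ``terms touching $i$ but not $j$,'' with the would-be $(i,j)$ and $(j,i)$ terms killed by $e_{ij}=e_{ji}=0$. The paper simply fixes $\phi=\psi=$ mean to obtain the explicit normalisation constants you call $c_{ab}$, while you keep the linearisation abstract and are a bit more explicit about why the symmetry $e_{ij}=e_{ji}$ is needed; otherwise the two arguments coincide.
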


\begin{proof}
We prove this lemma by contradiction. Assume that the SIGN function $f_S$ with $G(X, E_{\char`\\e_{ij}})$ as input shows pairwise statistical interaction between node $i$ and node $j$, where $G(X, E_{\char`\\e_{ij}})$ is a graph with $E_{\char`\\e_{ij}}$ being a set of edges that $e_{ij}=0$.

Recall that the SIGN function in Equation \ref{fun:si_final_function}. Without losing generality, we set both the aggregation functions $\phi$ and $\psi$ being element-wise average. That is:
\begin{equation}
\label{eq_lemma_ignfun_average}
    f_{S}(G) =\frac{1}{|X|} \sum_{i\in X}(\frac{1}{\rho(i)}\sum_{j\in X}(e_{ij}h(\bm{u}_{i}, \bm{u}_{j}))),
\end{equation}
where $\rho(i)$ is the degree of node $i$.

From Equation \ref{eq_lemma_ignfun_average}, we know that the SIGN function can be regarded as the linear aggregation of non-additive statistical interaction modeling procedures $h(\bm{u}_{k}, \bm{u}_{m})$ for all nodes pairs ($k$, $m$) that $k,m\in X$ and $e_{km}=1$. Since $E_{\char`\\e_{ij}}$ does not contain an edge between $i$ and $j$ (i.e., $e_{ij}=0$), the SIGN function does not perform interaction modeling between the two nodes into final predictions. 

According to Definition \ref{def:spi}, since $i$ and $j$ have statistical interaction, we cannot find a replacement form of SIGN function like:
\begin{equation}
\label{eq_lemma_def1}
\begin{split}
    f_{S}(G) =& q_{\char`\\i}(\bm{u}_{1},...,\bm{u}_{i-1},\bm{u}_{i+1},...,\bm{u}_{|X|}) \\
    & + q_{\char`\\j}(\bm{u}_{1},...,\bm{u}_{j-1},\bm{u}_{j+1},...,\bm{u}_{|X|}),
\end{split}
\end{equation}
where $q_{\char`\\i}$ and $q_{\char`\\j}$ are functions without node $i$ and node $j$ as input, respectively.

However, from our assumption, since there is no edge between node $i$ and node $j$, there is no interaction modeling function that performs between them in $f_{S}(G)$. Therefore, we can easily find many such $q_{\char`\\i}$ and $q_{\char`\\j}$ that satisfy Equation \ref{eq_lemma_def1}. For example:
\begin{equation}
\label{eq_lemma_ignfun_average_fi}
\begin{split}
    q_{\char`\\i}&(\bm{u}_{1},...,\bm{u}_{i-1},\bm{u}_{i+1},...,\bm{u}_{|X|}) = \\
    & \frac{1}{|X|} \sum_{k \in  X\char`\\\{i\}}(\frac{1}{\rho(k)}\sum_{m\in X\char`\\\{i\}}(e_{km}h(\bm{u}_{k}, \bm{u}_{m}))),
\end{split}
\end{equation}

and
\begin{equation}
\label{eq_lemma_ignfun_average_fj}
\begin{split}
    q_{\char`\\j}(\bm{u}_{1},&...,\bm{u}_{j-1},\bm{u}_{j+1},...,\bm{u}_{|X|}) = \\
    &\frac{1}{|X|} \sum_{m\in X\char`\\\{j\}}(\frac{1}{\rho(i)}e_{im}h(\bm{u}_{i}, \bm{u}_{m})) \\ 
    & +\frac{1}{|X|}\sum_{k \in  X\char`\\\{j\}}(\frac{1}{\rho(k)}e_{ki}h(\bm{u}_{k}, \bm{u}_{i})).
\end{split}    
\end{equation}

Therefore, it contradicts our assumption. Lemma \ref{lemma_inte2edge} is proved. 
\end{proof}

\begin{lemma}
\label{lemma_edge2inte}
Under the condition of Theorem \ref{thm:interaction}, for a graph $G(X, E)$, if an edge links node $i$ and node $j$ in $G$ (i.e., $i,j\in X$ and $e_{ij}=1$), then $f_{S}(G)$ shows pairwise statistical interaction between node $i$ and node $j$.  
\end{lemma}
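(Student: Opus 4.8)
\textbf{Proof proposal for Lemma \ref{lemma_edge2inte}.}

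The plan is to prove the lemma by contradiction: assume $e_{ij}=1$ but $f_S(G)$ shows \emph{no} pairwise statistical interaction between nodes $i$ and $j$, and derive a contradiction with the non-additivity of the interaction modeling function $h$. As in the proof of Lemma \ref{lemma_inte2edge}, I would fix the two aggregation functions $\phi$ and $\psi$ to be the element-wise average, so that $f_S(G)$ takes the explicit form in Equation \eqref{eq_lemma_ignfun_average}, namely a linear combination over all node pairs $(k,m)$ with $e_{km}=1$ of the terms $h(\bm u_k,\bm u_m)$ (after applying the scalar map $g$, which is linear and hence commutes with the averaging). The key observation is that in this sum, because $e_{ij}=1$ and $e_{ij}=e_{ji}$, the term $h(\bm u_i,\bm u_j)$ genuinely appears (with a nonzero coefficient coming from $\tfrac{1}{|X|}\tfrac{1}{\rho(i)}$, and symmetrically from $\tfrac{1}{|X|}\tfrac{1}{\rho(j)}$), and it is the \emph{only} term in the entire expression that depends on both $\bm u_i$ and $\bm u_j$ simultaneously.

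Next I would invoke Definition \ref{def:spi}: if $f_S(G)$ showed no statistical interaction between $i$ and $j$, then $f_S(G)$ would admit a decomposition $f_S(G)=q_{\backslash i}+q_{\backslash j}$ where $q_{\backslash i}$ does not depend on $\bm u_i$ and $q_{\backslash j}$ does not depend on $\bm u_j$. I would then take the difference between $f_S(G)$ and such a hypothetical decomposition and isolate the contribution of the $h(\bm u_i,\bm u_j)$ term. Concretely, consider the mixed second difference (or, at the informal level the paper works at, the mixed partial sensitivity) with respect to $\bm u_i$ and $\bm u_j$: every term of $f_S(G)$ except the $(i,j)$ pair term is annihilated, because each other term omits at least one of $\bm u_i,\bm u_j$; and both $q_{\backslash i}$ and $q_{\backslash j}$ are annihilated as well, since one omits $\bm u_i$ and the other omits $\bm u_j$. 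Hence the mixed difference of $h(\bm u_i,\bm u_j)$ (up to the positive scalar coefficient and the linear maps $g,\psi,\phi$) must vanish identically — but this contradicts the assumption that $h$ is a non-additive function, i.e.\ that $h(\bm u_i,\bm u_j)$ cannot be written as $f_{\backslash i}(\bm u_j)+f_{\backslash j}(\bm u_i)$ for any functions $f_{\backslash i},f_{\backslash j}$. Therefore no such decomposition of $f_S(G)$ exists, so $f_S(G)$ flags a pairwise statistical interaction between $i$ and $j$.

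The main obstacle I anticipate is making the ``isolate the $h(\bm u_i,\bm u_j)$ term'' step fully rigorous: one must argue that the presence of a genuinely non-additive $h$ in a single summand cannot be cancelled by the rest of the sum, which involves controlling the interaction between the $(i,j)$ term and all the other pair terms that share exactly one of the two variables (e.g.\ $h(\bm u_i,\bm u_m)$ and $h(\bm u_k,\bm u_j)$, which appear in $q_{\backslash j}$ and $q_{\backslash i}$ respectively in Equations \eqref{eq_lemma_ignfun_average_fi}–\eqref{eq_lemma_ignfun_average_fj}). The clean way to handle this is exactly the mixed-difference / second-order cross-derivative argument sketched above, since that operator kills every ``one-variable-missing'' term uniformly, leaving only the obstruction carried by $h$ itself; I would also need a brief remark that the linear maps $g$, $\psi$, $\phi$ preserve non-additivity (a linear function of a non-additive function, composed with linear aggregation that keeps the summand separate, remains non-additive in the relevant pair of arguments), and that the argument is independent of the particular permutation-invariant linear aggregators chosen, so fixing them to be averages is without loss of generality.
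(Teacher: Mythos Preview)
Your proposal is correct and follows the same overall route as the paper: proof by contradiction, fix $\phi,\psi$ to be averages so that $f_S(G)$ is a linear combination of the $h(\bm u_k,\bm u_m)$ terms, isolate the single summand $h(\bm u_i,\bm u_j)$ carried by the edge $e_{ij}=1$, and argue that its non-additivity obstructs any decomposition $q_{\backslash i}+q_{\backslash j}$.

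The one difference worth noting is in how the obstruction is pinned down. The paper simply pulls out the $(i,j)$ summand with coefficient $\tfrac{\rho(i)+\rho(j)}{|X|\rho(i)\rho(j)}$, observes that the remaining sum decomposes exactly as in Lemma~\ref{lemma_inte2edge}, and then asserts that the non-additive $h(\bm u_i,\bm u_j)$ cannot be merged into either $q_{\backslash i}$ or $q_{\backslash j}$. Your mixed second-difference argument is a cleaner way to make this last step rigorous: it shows that \emph{any} candidate decomposition (not just the obvious one) must have vanishing cross-variation in $(\bm u_i,\bm u_j)$, which directly contradicts non-additivity of $h$. This closes the small logical gap in the paper's phrasing (ruling out one decomposition is not the same as ruling out all). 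Your caveat about $g$, $\psi$, $\phi$ preserving non-additivity is also well placed; the paper sidesteps this by silently dropping $g$ from the displayed equations.
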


\begin{proof}
We prove this lemma by contradiction as well. Assume there is a graph $G(X, E)$ with a pair of nodes $(i,j)$ that $e_{ij} = 1$, but shows no pairwise statistical interaction between this node pair in $f_{S}(G)$. 

Since $e_{ij} =1$, we can rewrite SIGN function as:
\begin{equation}
\label{eq_lemma_haseij}
\begin{split}
    f_{S}(G) =&\frac{1}{|X|} \sum_{k \in  X}(\frac{1}{\rho(k)}\sum_{m\in X}(e_{km}h(\bm{u}_{k}, \bm{u}_{m}))) \\
    &+ \frac{\rho(i) + \rho(j)}{|X| \rho(i) \rho(j)}(h(\bm{u}_{i}, \bm{u}_{j})),
\end{split}    
\end{equation}
where $(k,m) \notin \{(i,j),(j,i)\}$.

In our assumption, $f_{S}(G)$ shows no pairwise statistical interaction between node $i$ and node $j$. That is, we can write $f_{S}(G)$ in the form of Equation \ref{eq_lemma_def1} according to Definition \ref{def:spi}. For the first component in the RHS of Equation \ref{eq_lemma_haseij}, we can easily construct functions $q_{\char`\\i}$ and $q_{\char`\\j}$ in a similar way of Equation \ref{eq_lemma_ignfun_average_fi} and Equation \ref{eq_lemma_ignfun_average_fj} respectively. However, for the second component in the RHS of Equation \ref{eq_lemma_haseij}, the non-additive function $h(\bm{u}_{i}, \bm{u}_{j})$ operates on node $i$ and node $j$. Through the definition of non-additive function, we cannot represent a non-additive function $h$ as a form like $h(\bm{u}_{i}, \bm{u}_{j})=f_{1}(\bm{u}_{i})+f_{2}(\bm{u}_{j})$, where $f_{1}$ and $f_{2}$ are functions. That is to say, we cannot merge the second component in the RHS into either $q_{\char`\\i}$ or $q_{\char`\\j}$. 

Therefore, Equation \ref{eq_lemma_haseij} cannot be represented as the form of Equation \ref{eq_lemma_def1}, and the node pair $(i,j)$ shows pairwise statistical interaction in $f_S(G)$, which contradicts our assumption. Lemma \ref{lemma_edge2inte} is proved.
\end{proof}

Combing Lemma \ref{lemma_inte2edge} and Lemma \ref{lemma_edge2inte}, Theorem \ref{thm:interaction} is proved.
\end{proof}

\section{Proof of Corollary \ref{coro:interaction}}
\label{appx:corollary}

\textbf{Corollary \ref{coro:interaction}.} \textit{(\textbf{Statistical Interaction in $L_0$-SIGN}) Consider a graph $G$ that the edge set is unknown. Let $G$ be the input of $L_0$-SIGN function $F_{LS}(G)$ in Equation \ref{fun:l0_final_function}, the function flags pairwise statistical interaction between node $i$ and node $j$ if and only if they are predicted to be linked by an edge in $G$ by $L_0$-SIGN, i.e., $e^{'}_{ij}=1$.}
\begin{proof}
In Equation \ref{fun:l0_final_function}, we can perform the prediction procedure by first predicting edge values on all potential node pairs. Then we perform node pair modeling and aggregating the results to get the predictions (as illustrated in Figure \ref{fig:sign_frame}). Specifically, we can regard the edge prediction procedure in $L_0$-SIGN as being prior to the following SIGN procedure. The edge values in an input graph $G(X, \emptyset)$ can be first predicted by function $F_{ep}$. Then, we have the graph $G(X, E^{'})$, where $E^{'}$ is a predicted edge set. Therefore, the following procedure is the same as the SIGN model with $G(X, E^{'})$ as the input graph, which satisfies Theorem \ref{thm:interaction}.
\end{proof}

\section{Algorithms}
\label{appx:algorithms}

In this section, we provide the pseudocode of SIGN and $L_0$-SIGN prediction algorithms in Algorithm \ref{alg:sign_prediction} and Algorithm \ref{alg:l0sign_prediction}, respectively. Meanwhile, we provide the pseudocode of SIGN and $L_0$-SIGN training algorithm in Algorithm \ref{alg:training_sign} and Algorithm \ref{alg:training_l0sign}, respectively.
\begin{algorithm}[H]
   \caption{SIGN prediction function $f_{S}$}
   \label{alg:sign_prediction}
\begin{algorithmic}
   \STATE {\bfseries Input:} data $G(X, E)$
   \FOR{each pair of feature ($i$, $j$)}
   \IF{$e_{ij} = 1$}
   \STATE $\bm{z_{ij}}=h(x_{i}\bm{v}_{i}, x_{j}\bm{v}_{j})$
   \STATE $\bm{s}_{ij}=\bm{z}_{ij}$
   \ELSE
   \STATE $\bm{s}_{ij}=\bm{0}$
   \ENDIF
   \ENDFOR
   \FOR{ each feature $i$}
   \STATE $\bm{v}_{i}^{'} = \psi(\varsigma_{i})$
   \STATE $\bm{u}_{i}^{'} = x_i\bm{v}_{i}^{'}$
   \STATE $\nu_i=g(\bm{u}_{i}^{'})$
   \ENDFOR
   \STATE $y^{'} = \phi(\nu)$
   \STATE {\bfseries Return:} $y^{'}$
\end{algorithmic}
\end{algorithm}

\begin{algorithm}[H]
   \caption{$L_0$-SIGN prediction function $f_{LS}$}
   \label{alg:l0sign_prediction}
\begin{algorithmic}
   \STATE {\bfseries Input:} data $G(X, \emptyset)$
   \FOR{each pair of feature ($i$, $j$)}
   \STATE $e^{'}_{ij}=HardConcrete(f_{ep}(\bm{v}^{e}_{i}, \bm{v}^{e}_{j}))$
   \STATE $\bm{z_{ij}}=h(x_{i}\bm{v}_{i}, x_{j}\bm{v}_{j})$
   \STATE $\bm{s}_{ij}=e^{'}_{ij}\bm{z}_{ij}$
   \ENDFOR
   \FOR{ each feature $i$}
   \STATE $\bm{v}_{i}^{'} = \psi(\varsigma_{i})$
   \STATE $\bm{u}_{i}^{'} = x_i\bm{v}_{i}^{'}$
   \STATE $\nu_i=g(\bm{u}_{i}^{'})$
   \ENDFOR
   \STATE $y^{'} = \phi(\nu)$
   \STATE {\bfseries Return:} $y^{'}$
\end{algorithmic}
\end{algorithm}

\begin{algorithm}[H]
   \caption{Training procedure of SIGN}
   \label{alg:training_sign}
\begin{algorithmic}
   \STATE Randomly initialize $\bm{\theta}$
   \REPEAT
   \FOR{ each input-output pair $(G_n(X_n, E_n), y_n)$}
   \STATE get $y^{'}_{n}, \bm{v}^{'}_{n}$ from $f_S(G_n; \bm{\theta})$
   \STATE $\bm{v}_{n} = \bm{v}^{'}_{n}$
   \ENDFOR
   \STATE $\mathcal{R}(\bm{\theta})=\frac{1}{N}\sum_{n=1}^{N}(\mathcal{L}(F_{LS}(y^{'}_{n},y_{n}))$
   \STATE update $\bm{\theta}$ (exclude $\bm{v}$) by $\min \mathcal{R}(\bm{\theta})$
   \UNTIL{ reach the stop conditions}
\end{algorithmic}
\end{algorithm}

\begin{algorithm}[H]
   \caption{Training procedure of $L_0$-SIGN}
   \label{alg:training_l0sign}
\begin{algorithmic}
   \STATE Randomly initialize $\bm{\theta}, \bm{\omega}$
   \REPEAT
   \FOR{each input-output pair $(G_n(X_n, \emptyset),y_n)$}
   \STATE get $y^{'}_{n}, \bm{v}^{'}_{n}$ from $f_{LS}(G_n; \bm{\theta}, \bm{\omega})$
   \STATE $\bm{v}_{n} = \bm{v}^{'}_{n}$
   \ENDFOR
   \STATE calculate $\mathcal{R}(\bm{\theta}, \bm{\omega})$ through Equation \ref{fun:l0_sign_loss}
   \STATE update $\bm{\omega}, \bm{\theta}$(exclude $\bm{v}$) by $\min \mathcal{R}(\bm{\theta}, \bm{\omega})$
   \UNTIL{ reach the stop conditions}
\end{algorithmic}
\end{algorithm}

\section{Time Complexity Analysis}
\label{appx:time_complexity}
Our model performs interaction detection on each pair of features and interaction modeling on each detected feature interaction. The time complexity is $O(q^2(M^e+M^h))$, where $q=|X_n|$, $O(M^e)$ and $O(M^h)$ are the complexities of interaction detection and interaction modeling on one feature interaction, respectively. The two aggregation procedures $\phi$ and $\psi$ can be regarded together as summing up all statistical interaction analysis results two times. The complexity is $O(2dq^2)$. Finally, there is an aggregation procedure $g$ on each node. The complexity is $O(qd)$. Therefore, the complexity of our model is $O(q^2(M^e+M^h+2d) + qd)$.

\section{$L_0$ Regularization}
\label{appx:l0_reg}
$L_{0}$ regularization encourages the regularized parameters $\bm{\theta}$ to be exactly zero by setting an $L_0$ term:
\begin{equation}
\label{eq_l0}
   \lVert \bm{\theta} \rVert_{0} =\sum_{j=1}^{|\bm{\theta}|}\mathbb{I} [\theta_{j}\ne 0],
\end{equation}
where $|\bm{\theta}|$ is the dimensionality of the parameters and $\mathbb{I}$ is $1$ if $\theta_{j}\ne 0$, and $0$ otherwise.

For a dataset $D$, an empirical risk minimization procedure is used with $L_{0}$ regularization on the parameters $\bm{\theta}$ of a hypothesis $\mathcal{H}(\cdot;\bm{\theta})$, which can be any objective function involving parameters, such as neural networks. Then, using reparameterization of $\bm{\theta}$ , we set $\theta_j=\tilde{\theta}_j z_j$, where $\tilde{\theta}_j\ne 0$ and $z_j$ is a binary gate with Bernoulli distribution $Bern(\pi_j)$ \cite{louizos2017learning}. The procedure is represented as:
\begin{equation}
\small
\begin{gathered}
    \label{eq_l0_rmp}
   \mathcal{R}(\bm{\tilde{\theta}}, \bm{\pi}) = \mathbb{E}_{p(\bm{z}\mid \bm{\pi})} \frac{1}{N}(\sum_{n=1}^{N}\mathcal{L}(\mathcal{H}(X_n;\bm{\tilde{\theta}}\odot\bm{z}),y_n)) + \lambda \sum_{j=1}^{|\bm{\theta}|}\pi_j,  \\
\bm{\tilde{\theta}}^{*}, \bm{\pi}^{*}=\argmin_{\bm{\tilde{\theta}},\bm{\pi}} {\mathcal{R}(\bm{\tilde{\theta}}, \bm{\pi})}, 
\end{gathered}
\end{equation}
where $p(z_j|\pi_j)=Bern(\pi_j)$, $N$ is the number of samples in $D$, $\odot$ is element-wise production, $\mathcal{L}(\cdot)$ is a loss function and $\lambda$ is the weighting factor of the $L_{0}$ regularization.

\section{Approximate $L_0$ Regularization with Hard Concrete Distribution}
\label{appx:hard_concrete}
A practical difficulty of performing $L_0$ regularization is that it is non-differentiable. Inspired by \cite{louizos2017learning}, we smooth the $L_0$ regularization by approximating the binary edge value with a hard concrete distribution. 
Specifically, let $f_{ep}$ now output continuous values. Then
\begin{equation}
\label{fun:hard_concrete}
\small
\begin{split}
     & u \sim \mathcal{U}(0,1), \\
     & s  = Sigmoid((\log u-\log (1-u)+\log(\alpha_{ij}))/\beta), \\
     & \Bar{s}  = s(\delta-\gamma) + \gamma, \\
     & e^{'}_{ij}  = min(1, max(0, \Bar{s})),
\end{split}
\end{equation}
where $u\sim\mathcal{U}(0,1)$ is a uniform distribution, $Sig$ is the Sigmoid function, $\alpha_{ij}\in \mathbb{R}^{+}$ is the output of $f_{ep}(\bm{v}^{e}_{i},\bm{v}^{e}_{j})$, $\beta$ is the temperature and $(\gamma, \delta)$ is an interval with $\gamma<0$, $\delta>0$.

Therefore, the $L_0$ activation regularization is changed to: $\sum_{i,j\in X_n}(\pi_{n})_{ij} = \sum_{i,j\in X_n}Sig(\log \alpha_{ij} - \beta \log \frac{-\gamma}{\delta}).$

As a result, $e^{'}_{ij}$ follows a hard concrete distribution through the above approximation, which is differentiable and approximates a binary distribution. Following the recommendations from \cite{maddison2016concrete}, we set $\gamma=-0.1, \delta=1.1$ and $\beta=2/3$ throughout our experiments.
We refer interested readers to \cite{louizos2017learning} for details about hard concrete distributions.

\section{Additional Experimental Results}
\label{appx:f1-score}
We also evaluate our model using the f1-score. We show the evaluation results in this section.

\begin{table}[H]
\caption{Summary of results in comparison with baselines.}
\label{appxtab:performance_recom}
\begin{center}
\begin{sc}
\begin{tabular}{l|cc}
\hline
 & \textbf{Frappe} & \textbf{MovieLens} \\
 & F1-score & F1-score \\
\hline
FM      & 0.8443 & 0.8374\\
AFM     & 0.8533 & 0.8531 \\
NFM     & 0.8610 & 0.8624 \\
DeepFM  & 0.8671 & 0.8577 \\
xDeepFM & 0.8742 & 0.8681 \\
AutoInt & 0.8758 & 0.8693 \\
\hline
SIGN  & 0.8869 & 0.8704 \\
$L_0$-SIGN  & \textbf{0.8906} & \textbf{0.8771}\\
\hline
\end{tabular}
\end{sc}
\end{center}
\end{table}

\begin{table}[H]
\caption{The results in comparison with existing GNNs.}
\label{appxtab:performance_graph}
\begin{center}
\begin{sc}
\begin{tabular}{l|cc}
\hline
\multirow{2}{*}{} & \textbf{Twitter} & \textbf{DBLP} \\
 & F1-score & F1-score \\
\hline
GCN  & 0.6527 & 0.9198  \\
$L_0$-GCN & 0.6523 & 0.9208 \\
\hline
CHEBY & 0.6491 & 0.9192 \\
$L_0$-CHEBY & 0.6495 & 0.9197 \\
\hline
GIN & 0.6567 & 0.9227 \\
$L_0$-GIN & 0.6573 & 0.9234 \\
\hline
SIGN & 0.6601 & 0.9275 \\
$L_0$-SIGN & \textbf{0.6674} & \textbf{0.9397} \\
\hline
\end{tabular}
\end{sc}
\end{center}
\end{table}

We can see that our model still performs best among all baselines when evaluating with the F1-score.

\end{document}